\documentclass[11pt]{article}
\usepackage{natbib}
\usepackage[margin=0.95in]{geometry}
\usepackage{graphicx}
\usepackage{amsmath,amssymb,amsfonts,amsthm,mathrsfs}
\usepackage{color}
\usepackage[small,bf]{caption}
\usepackage{makecell}
\newcommand{\mylabel}[2]{#2\def\@currentlabel{#2}\label{#1}}
\makeatother
\usepackage[hypertexnames=false]{hyperref}
\hypersetup{
  colorlinks=true,
  linkcolor=blue,
  filecolor=blue,
  citecolor = blue,      
  urlcolor=cyan,
}

\usepackage[utf8]{inputenc} 
\usepackage[T1]{fontenc}    
\usepackage{hyperref}       
\usepackage{url}            
\usepackage{booktabs}       
\usepackage{amsfonts}       
\usepackage{nicefrac}       
\usepackage{microtype}      
\usepackage{xcolor}         
\usepackage{multirow}
\usepackage{quoting}

\usepackage{amsmath,amsfonts, amssymb}
\usepackage{mathtools}
\usepackage{amsthm} 
\usepackage{latexsym}
\usepackage{relsize}
\usepackage{xcolor}

\setlength\parindent{0pt}

\usepackage{hyperref}       
\usepackage{url}            
\usepackage{algorithm}
\usepackage{algorithmic}
\usepackage{braket}

\usepackage{booktabs}       
\usepackage{amsfonts}       
\usepackage[shortlabels]{enumitem}
\usepackage[bbgreekl]{mathbbol}
\usepackage{mdframed}

\DeclareSymbolFontAlphabet{\mathbb}{AMSb}
\DeclareSymbolFontAlphabet{\mathbbl}{bbold}

\usepackage[capitalise]{cleveref}
\usepackage{dsfont}

\newcommand{\F}{\mathbb{F}}

\usepackage{tikz}
\usetikzlibrary{positioning}
\usepackage{makecell}
\usetikzlibrary{fit}
\usetikzlibrary{shapes.geometric}

\def\Y{{\mathcal Y}}

\def\E{{\mathbb E}}




\newcommand{\A}{\mathcal{A}}

\newcommand{\K}{\ensuremath{\mathcal K}}

\renewcommand\F{\mathcal{F}}

\def\w{\mathbf{w}}

\def\regret{\mbox{{Regret}}}

\def\ML{\mathcal{L}}

\newcommand{\ignore}[1]{}

\newcommand{\zl}[1]{\noindent{\textcolor{blue}{\{{\bf ZL:} \em #1\}}}}

%
%
%

\theoremstyle{plain}
\newtheorem{theorem}{Theorem}
\newtheorem{lemma}[theorem]{Lemma}

\newtheorem{observation}[theorem]{Observation}
\newtheorem{assumption}{Assumption}

\newtheorem*{theorem*}{Theorem}
\newtheorem*{lemma*}{Lemma}
\newtheorem*{corollary*}{Corollary}
\newtheorem*{proposition*}{Proposition}
\newtheorem*{claim*}{Claim}
\newtheorem*{fact*}{Fact}
\newtheorem*{observation*}{Observation}
\newtheorem*{assumption*}{Assumption}

\theoremstyle{definition}
\newtheorem{definition}[theorem]{Definition}

\newtheorem*{definition*}{Definition}
\newtheorem*{remark*}{Remark}
\newtheorem*{example*}{Example}

\theoremstyle{plain}
\newtheorem*{theoremaux}{\theoremauxref}
\gdef\theoremauxref{1}

%




\DeclareMathAlphabet{\mathbfsf}{\encodingdefault}{\sfdefault}{bx}{n}










\newcommand{\eps}{\varepsilon}

\let\oldtfrac\tfrac
\renewcommand{\tfrac}[2]{\smash{\oldtfrac{#1}{#2}}}

\let\nablaold\nabla
\renewcommand{\nabla}{\nablaold\mkern-2.5mu}

\usepackage{bbm}

\usepackage{todonotes}
\definecolor{PalePurp}{rgb}{0.66,0.57,0.66}

\DeclareSymbolFontAlphabet{\mathbb}{AMSb}


\newcommand{\e}{\mathbf{e}}

\newcommand{\M}{\mathcal{M}}
\newcommand{\embed}{\mathfrak{e}}

\title{Tight Rates for Bandit Control Beyond Quadratics}

\author{
  Y. Jennifer Sun\thanks{
  Princeton University. 
  \texttt{ys7849@princeton.edu}.}
  \and Zhou Lu\thanks{
  Princeton University.
  \texttt{zhoul@princeton.edu}.}
}

\begin{document}

\maketitle

\begin{abstract}
Unlike classical control theory, such as Linear Quadratic Control (LQC), real-world control problems are highly complex. These problems often involve adversarial perturbations, bandit feedback models, and non-quadratic, adversarially chosen cost functions. A fundamental yet unresolved question is whether optimal regret can be achieved for these general control problems. The standard approach to addressing this problem involves a reduction to bandit convex optimization with memory. In the bandit setting, constructing a gradient estimator with low variance is challenging due to the memory structure and non-quadratic loss functions.

In this paper, we provide an affirmative answer to this question. Our main contribution is an algorithm that achieves an $\tilde{O}(\sqrt{T})$ optimal regret for bandit non-stochastic control with strongly-convex and smooth cost functions in the presence of adversarial perturbations, improving the previously known $\tilde{O}(T^{2/3})$ regret bound from \citep{cassel2020bandit}. Our algorithm overcomes the memory issue by reducing the problem to Bandit Convex Optimization (BCO) without memory and addresses general strongly-convex costs using recent advancements in BCO from \citep{suggala2024second}. Along the way, we develop an improved algorithm for BCO with memory (BCO-M), which may be of independent interest.

\end{abstract}

\section{Introduction}
Optimal control lies at the heart of engineering and operations research, with applications ranging from launching spacecraft to stabilizing economies. The theory of optimal control is a well-established field with a rich history, dating back to 1868 when James Clerk Maxwell analyzed governors, and flourishing in the mid-20th century with the development of dynamic programming by \citep{bellman1954theory} and the Kalman filter by \citep{kalman1960new}.

Classic optimal control theory studies the problem where a controller interacts with an environment, according to a (partially observable) linear time-invariant (LTI) dynamical system:
\begin{align}
\label{eq:lds}
x_{t+1}=Ax_t+Bu_t+w_t, \ \ \ y_t=Cx_t+e_t,
\end{align}
where $A,B,C$ are the dynamics governing the system, $x_t, u_t, y_t, w_t, e_t$ represent the system state, control input, observation, perturbation, and observation noise at time $t$, respectively. At each time $t$, the controller observes $y_t$ and chooses a control $u_t$, incurring a cost $c_t(y_t,u_t)$ based on the current observation and control. The system then evolves according to Eq.~(\ref{eq:lds}) to reach the next state $x_{t+1}$.

The theory of optimal control (e.g., LQC) typically relies on three key assumptions on the setting: the perturbation $w_t$ is stochastic, the cost function $c_t$ is quadratic and known in advance, and the function $c_t$ is observable to the controller. The linear-quadratic regulator (LQR) \citep{kalman1960contributions} provides a closed-form optimal solution under these conditions, representing a pinnacle of classical control theory.

However, these assumptions are often too idealistic for practical scenarios. In real-world control problems, the perturbations can be adversarial, the feedback model can be bandit, and the cost function can be non-quadratic. This is evident in applications such as autonomous vehicle navigation, advertisement placement, and traffic signal control. This discrepancy between theory and practice raises a fundamental question for developing a more general control theory:

\begin{quoting}
    \emph{Can we devise algorithms with provable guarantees for LTI control problems with adversarial perturbations, bandit feedback models, and non-quadratic cost?}
\end{quoting}

Recent research in online non-stochastic control (see \citep{hazan2022introduction} for a survey) aims to address this broader goal by relaxing two of the standard assumptions: (1) the cost $c_t$ can be time-varying, non-quadratic convex functions unknown to the controller; (2) the perturbation $w_t$ and the observation noise $e_t$ can be adversarially chosen. 

The natural performance metric in this context is \textit{regret}, defined as the excess cost incurred by the controller compared to the best control policy in a benchmark policy class $\Pi$:
\begin{align}
\label{eq:control-regret}
\regret_T^{\Pi}(\texttt{controller})=\sum_{t=1}^T c_t(y_t,u_t)-\min_{\pi\in\Pi}\sum_{t=1}^T c_t(y_t^{\pi},u_t^{\pi}),
\end{align}
where $(y_t,u_t)$ is the observation-control pair reached by executing the controller at time $t$, and $(y_t^{\pi},u_t^{\pi})$ is the observation-control pair under the policy $\pi$ at time $t$. An optimal $\tilde{O}(\sqrt{T})$ regret was obtained by \citep{agarwal2019online} with the Gradient Perturbation Controller (\texttt{GPC}) algorithm.

Several works in online control have made further progress towards the general question. \citep{cassel2020bandit,sun2024optimal} obtained optimal regret under bandit feedback for strongly convex and smooth cost when the perturbation $w_t$ and observation noise $e_t$ are semi-adversarial (i.e. they contain an additive stochastic component that admits covariance matrices with least singular value bounded from below). \citep{cassel2020bandit} also showed a sub-optimal $\tilde{O}(T^{2/3})$ regret bound for fully adversarial perturbations. More recently, the advancement of \citep{suggala2024second} showed for the first time that an optimal regret of $\tilde{O}(\sqrt{T})$ is achievable for strongly convex and smooth \textbf{quadratic} costs in the presence of adversarial perturbations and observation noises.

Still, no previous work has simultaneously addressed all three challenges with an optimal regret guarantee, which was left as an open problem by \citep{suggala2024second}. The main challenge lies in how to construct a low-variance gradient estimator under bandit feedback, with the memory structure and non-quadratic cost. Due to the $\Omega(T^{2/3})$ regret lower bound for general BCO with memory by \citep{suggala2024second}, exploiting the special affine memory structure in control problems is crucial to achieving optimal regret for general convex cost.

In this work, we provide the first affirmative answer to this general question by devising an algorithm that handles all three challenges with an optimal $\tilde{O}(\sqrt{T})$ regret bound. Our approach involves reducing the problem to no-memory BCO, which circumvents the high-dimensional estimator issue for non-quadratic costs. We then leverage a special curvature structure of the loss function induced by general strongly convex and smooth costs to obtain the optimal regret guarantee. Our result serves as a preliminary step toward fully solving the general control problem.

\begin{table}[ht]
\begin{center}
\begin{tabular}{@{}p{\textwidth}@{}}
	\centering
			\bgroup
\def\arraystretch{1.5}

\begin{tabular}{|c|c|c|c|c|}
\hline
& Regret & Perturbation & Feedback & Loss type \\
\hline
\citep{agarwal2019online} & $ \tilde{O}(\sqrt{T})$ & adversarial & full & strongly-convex\\
\hline
\citep{agarwal2019logarithmic} & $ O(\log^7 T)$ & stochastic & full & convex\\
\hline
\citep{cassel2020bandit} & $ \tilde{O}(T^{2/3})$ & adversarial & bandit & convex smooth\\
\hline
\citep{cassel2020bandit} & $ \tilde{O}(\sqrt{T})$ & stochastic & bandit & strongly-convex smooth\\
\hline
\citep{sun2024optimal} & $ \tilde{O}(\sqrt{T})$ & semi-adv & bandit & str-conv smooth quadratic\\
\hline
\citep{suggala2024second} & $ \tilde{O}(\sqrt{T})$ & adversarial & bandit & str-conv smooth quadratic\\
\hline
This work & $ \tilde{O}(\sqrt{T})$ & adversarial & bandit & strongly-convex smooth\\
\hline

\end{tabular}
\egroup
\end{tabular}
\caption{Comparison of results. This work is the first to address all three generalities with an $\tilde{O}(\sqrt{T})$ optimal regret: \citep{agarwal2019online} addressed perturbation + loss, \citep{cassel2020bandit} addressed feedback + loss, \citep{suggala2024second} addressed perturbation + feedback. \citep{cassel2020bandit} also obtained a sub-optimal $\tilde{O}(T^{2/3})$ regret for perturbation + feedback + loss.}

\label{table:result_summary}
\end{center}
\end{table}

\subsection{Technical Overview}

Several previous works have addressed the bandit non-stochastic control problem, but none achieved optimal regret across all three generalities due to the following technical challenges:

\begin{enumerate}
    \item \textbf{The necessity of curvature}: The first work tackling the bandit non-stochastic control problem was \citep{cassel2020bandit}, which achieved an $\tilde{O}(T^{2/3})$ regret bound for smooth convex cost. However, such sub-optimality is arguably inevitable due to the $\tilde{\Omega}(T^{2/3})$ regret lower bound for BCO-M with smooth convex~\footnote{In fact, the lower bound proved in \citep{suggala2024second} even holds for quadratic loss functions.} cost~\citep{suggala2024second}: all existing results on bandit non-stochastic control relies on reduction to BCO-M! Indeed, the algorithm of \citep{cassel2020bandit} is based on online gradient descent (\texttt{OGD}) and ignores the geometry of cost. This is the reason why we introduce the assumption on strong convexity. 

    \item \textbf{Strong convexity is not enough}: Does strong convexity alone become an easy remedy to the control problem? Unfortunately, even if we assume the cost functions $c_t$ in the control problem are strongly-convex, the induced loss functions in the BCO-M problem are not necessarily strongly-convex! It was observed by \citep{suggala2024second} that the induced loss functions satisfy a property called $\kappa$-convexity, allowing for low-variance estimation of Hessian matrices, which is a key ingredient in Newton-based second-order update to exploit the curvature.

    \item \textbf{Going beyond quadratic}: However, finding a low-biased first-order estimation of gradients remains a challenge for BCO-M, because the nice property on the unit sphere domain $\mathbb{S}_{d-1}:=\{x\in\mathbb{R}^d\mid \|x\|_2=1\}$ of the exploration term is not preserved under Cartesian product: $\mathbb{S}_{d-1}\times \mathbb{S}_{d-1}\ne \mathbb{S}_{d\times d-1}$. To handle this issue, \citep{suggala2024second} relies on the quadratic cost assumption which admits an unbiased estimator of the divergence of the induced loss function. It's unclear how to handle general $\kappa$-convex smooth cost in BCO-M.
\end{enumerate}

Similar to its full-information counterpart, we reduce the bandit non-stochastic control problem to bandit convex optimization with memory (BCO-M).
To overcome these challenges, we further reduce the BCO-M problem to a no-memory BCO problem following \citep{cassel2020bandit}, to avoid the issue on the Cartesian product of $\mathbb{S}_{d-1}$. This allows for the use of a Newton-based BCO-M algorithm similar to \citep{suggala2024second}, which can handle general $\kappa$-convex smooth cost because the exploration domain is now $\mathbb{S}_{d-1}$. Besides this new approach, our method also includes novel algorithmic components and analysis.

\subsection{Related Work}

Optimal control theory \citep{bellman1954theory} concerns finding a control for a dynamical system such that some objective function is optimized. The basic form of optimal control is linear quadratic control \citep{kalman1960new}, in which a quadratic function is minimized in a linear first-order dynamical system with stochastic noise, whose solution is given by LQR. 

Online non-stochastic control generalizes classic optimal control by considering adversarial perturbation and cost, with the metric of regret to compete with the best fixed policy in some benchmark class. The first work \citep{agarwal2019online} obtained an optimal regret bound for general convex cost by reduction to online convex optimization with memory \citep{anava2015online}, under stability assumptions on the system. For the line of works on the full information feedback setting, see \citep{hazan2022introduction} for a survey.

Online non-stochastic control with bandit feedback were first considered by \citep{cassel2020bandit,gradu2020non}. Under a smoothness assumption on cost functions, the two works showed an $\tilde{O}(T^{2/3})$ and $\tilde{O}(T^{3/4})$ regret bound respectively. When the cost functions are additionally strongly-convex quadratics, \citep{sun2024optimal} obtained an $\tilde{O}(\sqrt{T})$ regret, under semi-adversarial perturbations. \citep{yanhandling} showed sublinear regret for cost functions with heterogeneous curvatures under the same semi-adversarial perturbations.
This restriction on perturbation was later removed by \citep{suggala2024second} which achieved the same regret for adversarial perturbations.

Recent advancements on BCO \citep{lattimore2023second,suggala2021efficient,lattimore2024bandit} considered Newton step updates to achieve improved regret bounds. In particular, \citep{suggala2024second} also showed an $\tilde{\Omega}(T^{2/3})$ lower bound for bandit quadratic optimization with memory (BQO-M) without strong convexity. When making use of the affine memory structure, \citep{suggala2024second} obtained an $\tilde{O}(\sqrt{T})$ regret bound for BQO-M. 

\section{Preliminary}

\subsection{Online non-stochastic control with bandit feedback}
\label{sec:prelim-control}

Consider the linear dynamical system in \cref{eq:lds}. At time $t\in\mathbb{N}$, the learner receives observation $y_t$ and outputs control $u_t$, to which the learner receives a scalar instantaneous cost $c_t(y_t,u_t)$ depending on both the current observation and the control, and no other information about $c_t$ is available. The perturbation $w_t$ and observation noise $e_t$ can be adversarially chosen. The learner's goal is to minimize regret over a fixed time horizon $T\in\mathbb{N}$ defined in \cref{eq:control-regret}. 

Throughout this paper, all cost functions we consider are assumed to be twice continuously differentiable.
Consistent with past literature~\citep{sun2024optimal,suggala2024second}, we make the following assumptions on the cost functions.

\begin{assumption} [Cost curvature and regularity]
\label{asm:cost-curvature-regularity}
Let $d_y, d_u\in\mathbb{N}$ denote the dimensions of observation and control, respectively.
The control cost function $c_t:\mathbb{R}^{d_y}\times\mathbb{R}^{d_u}\rightarrow\mathbb{R}_{+}$ satisfies that 
\begin{enumerate}
\item (Curvature) $c_t$ is $\alpha_c$-strongly convex and $\beta_c$-smooth over some compact set $\mathcal{Y}\times \mathcal{U}\subset \mathbb{R}^{d_y}\times\mathbb{R}^{d_u}$ for some $\alpha_c,\beta_c>0$, i.e., $\nabla c_t(x_2)^{\top}(x_1-x_2)+\frac{\beta_c}{2}\|x_1-x_2\|_2^2\ge c_t(x_1)-c_t(x_2)\ge\nabla c_t(x_2)^{\top}(x_1-x_2)+\frac{\alpha_c}{2}\|x_1-x_2\|_2^2$ for any $x_1,x_2\in \mathcal{Y}\times \mathcal{U}$.
\item (Gradient bounds) There exists some $G_c>0$ such that the gradients satisfy $\|\nabla c_t(y,u)\|_2 \le G_c\|(y,u)\|_2$ for any $y\in\mathcal{Y},u\in\mathcal{U}$.
\end{enumerate}
\end{assumption}

We also make stability assumption on the LDS in \cref{eq:lds} as well as norm bound on the perturbation and noise. These assumptions are standard in literature (e.g. \citep{hazan2022introduction}) and necessary for deriving meaningful regret guarantees. 

\begin{assumption} [Strong stabilizability and bounded dynamics] 
\label{asm:strong-stabilizability}
$(A, B, C)$ that governs the LDS in \cref{eq:lds} satisfies that $\max\{\|A\|_{\mathrm{op}}, \|B\|_{\mathrm{op}}, \|C\|_{\mathrm{op}}\}\le \kappa_{\textbf{sys}}$ for some positive constant $\kappa_{\textbf{sys}}$. There exists $K\in\mathbb{R}^{d_u\times d_y}$ s.t. $A+BKC=HL^{-1}H$ for some $H\succ 0$ and $\max\{\|K\|_2, \|H\|_2, \|H^{-1}\|_2\}\le\kappa$, $\|L\|_2\le 1-\gamma$ for some $\kappa>0$, $0<\gamma\le 1$. Such $K$ is called a $(\kappa,\gamma)$-stabilizing linear controller for the LDS. 
\end{assumption}

\begin{assumption} [Bounded perturbation and noise]
\label{asm:bounded-w-e}
The perturbation and observation noise sequences satisfy the norm bound
$\max_{t\in[T]}\left\{\max\{\|w_t\|_2, \|e_t\|_2\}\right\}\le R_{w,e}$ for some $R_{w,e}>0$. 
\end{assumption}

We assume the adversary chooses the cost functions and noise sequences in an oblivious way, i.e. they are chosen independently of the learner's decisions. 

\begin{assumption} [Oblivious adversary]
\label{asm:adversary}
The sequence of cost functions $\{c_t\}_{t=1}^T$ and the noise sequences $\{w_t\}_{t=1}^T, \{e_t\}_{t=1}^T$ are chosen by an oblivious adversary and does not depend on the control played by the learner.
\end{assumption}

For partially observable systems, the standard comparator class in literature \citep{simchowitz2020improper} is the class of Disturbance Response Controllers (DRC). Essentially, this class considers controllers that choose linear combinations of past signals and observations. The signals are simply the would-be observations had a stabilizing controller $K$ been used from the beginning of the time. Formally, the class of DRC is given by the following definition:

\begin{definition} [DRC]
\label{def:drc}
(1) A disturbance response controller (DRC) is a policy $\pi_M$ parametrized by $m\in\mathbb{N}$ matrices $M=M^{[0:m-1]}$ in $\mathbb{R}^{d_u\times d_y}$ such that the control at time $t$ according to $\pi_M$ is 
\begin{align*}
u_t(\pi_M)=K y_t+\sum_{j=0}^{m-1}M^{[j]}y_{t-j}(K),
\end{align*}
where $K$ is a $(\kappa,\gamma)$-stabilizing linear controller, and $y_t^K$ is the would-be observation had the linear policy $K$ been carried out from the beginning of the time.

(2) A DRC policy class $\mathcal{M}(m, R_{\mathcal{M}})$, parameterized by $m\in\mathbb{N}, R_{\mathcal{M}}>0$, is the set of all DRC controller of length $m$ and obeys the norm bound $\|M\|_{\ell_1,\mathrm{op}}:=\sum_{j=0}^{m-1}\|M^{[j]}\|_{\mathrm{op}}\le R_{\mathcal{M}}$. 
\end{definition}

A DRC policy always produces controls that ensure the system remains state bounded, the DRC class is expressive enough to approximate the class of linear policies of past observations (see Theorem 1 in \citep{simchowitz2020improper}).

\subsection{Bandit convex optimization with memory (BCO-M)}

We start with the general protocol of bandit convex optimization with memory (BCO-M). In the (improper) BCO-M problem, at each time $t$, the learner is asked to output a decision $z_t\in\mathbb{R}^d$, to which a scalar loss $f_t(z_{t-m+1:t})$, depending on the learner's most recent $m\in\mathbb{N}$ decisions is revealed. 
Given a convex compact set $\K\subset\mathbb{R}^d$ as the domain of comparators, the regret is measured on the best single point $z\in\K$ over a time horizon of $T\in\mathbb{N}$, formally given by
\begin{align*}
\regret_T^{\K}=\max_{z\in\K} \ \E\left[\sum_{t=m}^Tf_t(z_{t-m+1:t})-f_t(z,\cdots,z)\right].
\end{align*}

In non-stochastic control, the learner's past decisions affect future states through an affine structure, therefore we are interested in BCO-M problems with such structure. This leads to the following structural assumption on the induced loss function with memory. We will show in \cref{lem:control-reduction} that the bandit control problems with the standard regularity conditions satisfy \cref{asm:affine-memory}.

\begin{assumption}[Affine memory structure]
\label{asm:affine-memory}
At time $t\in\mathbb{N}$, the loss function with memory length $m$ takes the form of
$f_t:(\mathbb{R}^d)^m\rightarrow \mathbb{R}_+$ given by the following structure
\begin{align}
\label{eq:affine-memory}
f_t(z_{t-m+1:t})=\ell_t\left(B_t+\sum_{i=0}^{m-1}G^{[i]}Y_{t-i}z_{t-i}\right), 
\end{align}    
    with parameters $B_t\in\mathbb{R}^n$, $Y_{t-i}\in\mathbb{R}^{p\times d}$, and $G=G^{[0:m-1]}$ a sequence of $m$ matrices where $G^{[i]}\in\mathbb{R}^{n\times p}$ for some $n, p\in\mathbb{N}$. \footnote{Here we could write $G^{[i]}Y_{t-i}$ with a single parameter, but the current form matches that of control.} We denote $G_t=\sum_{i=0}^{m-1}G^{[i]}Y_{t-i}\in\mathbb{R}^{n\times d}$, and
$H_t=G_t^{\top}G_t\in\mathbb{R}^{d\times d}$. 
There exists some $R_H>0$ such that $\max\{1,\|G_t\|_2, \|Y_t\|_2, \|H_t\|_2\}\le R_H$. In addition, we assume that $G$ satisfies positive convolution invertibility-modulus, i.e. $\kappa(G)=\inf_{\sum_{n\ge 0} \|u_n\|_2^2=1}\sum_{n\ge 0}\|\sum_{i=0}^n G^{[i]}u_{n-i}\|_2^2=\Omega(1)$. We assume that the learner receives $H_t$ every step after they incurred the loss.

\end{assumption}
We make two standard curvature and regularity assumptions on each of the instantaneous loss $f_t$, that $f_t$ is strongly-convex and smooth, with its subgradient norm bounded by some constants, following the previous work of \cite{suggala2024second}. 
\begin{assumption}[Curvature]
\label{asm:curvature}
Consider a loss function $f_t$ satisfying \cref{asm:affine-memory} and the set
\begin{align*}
\mathcal{Z}_t=\left\{B_t+\sum_{i=0}^{m-1}G^{[i]}Y_{t-i}z_{t-i}: z_{t-m+1},\dots,z_t\in \K+\mathbb{B}_{d}\right\}\subset\mathbb{R}^n,
\end{align*}
where $B_t, G, Y_{t-m+1:t}$ are the parameters, and $\ell_t:\mathbb{R}^n\rightarrow\mathbb{R}_{+}$ is the function associated with $f_t$ in \cref{asm:affine-memory}. We assume that $\ell_t$ is $\alpha_f$-strongly-convex and $\beta_f$-smooth, i.e. $\alpha_f I_{n}\preceq\nabla^2\ell_t(z)\preceq\beta_f I_{n}$, over $\mathcal{Z}_{t}$, with $0< \alpha_f\le 1 \le \beta_f$ here~\footnote{Any $\alpha$ strongly convex function is also $\min(1,\alpha)$ strongly convex and similar for $\beta$.}.
\end{assumption}

\begin{assumption}[Regularity]
\label{asm:regularity-bco-m}
For $d\in\mathbb{N}$, denote $\mathbb{B}_d:=\{x\in\mathbb{R}^d\mid \|x\|_2\le 1\}$ as the unit ball in $\mathbb{R}^d$. We assume that at time $t$, the with-memory loss function $f_t: (\mathbb{R}^d)^m\rightarrow \mathbb{R}_+$ with memory parameter $m\in\mathbb{N}$ obeys the following gradient bounds over $\K+\mathbb{B}_{d}:=\{x+y\mid x\in\K, y\in\mathbb{B}_{d}\}$:
\begin{align*}
\ \ \ \|\nabla f_t(z_1,\dots,z_m)\|_2\le G_f, \ \ \ \forall z_1,\dots,z_m\in\K+\mathbb{B}_{d}.
\end{align*}
Additionally, we assume that $\K$ has Euclidean diameter $D>0$. 
\end{assumption}

We consider an oblivious adversary model, given by \cref{asm:adv-adaptivity}.

\begin{assumption} [Oblivious adversary]
\label{asm:adv-adaptivity}
For a BCO-M instance, we assume that the adversary is oblivious. In particular, let $z_t$ denote the algorithm's decision at time $t$, then the loss function $f_t$ chosen by the adversary does not depend on $z_{1:t}$. 
\end{assumption}

In the definition of regret in BCO-M, we compare the cumulative loss of any algorithm with the best fixed comparator $z\in \mathcal{K}$, evaluated on the induced unary form $f_t(z,\dots,z)$ of the loss $f_t$. Formally, we have the following definition

\begin{definition}[Induced unary form]
\label{def:induced-unary-form}
$\forall t\in\mathbb{N}$, let $\bar{f}_t:\mathbb{R}^d\rightarrow\mathbb{R}_+$ denote the induced unary form of the loss $f_t$, given by $\bar{f}_t(z)=f_t(z,\dots,z)$. Then, for $f_t$ satisfying \cref{asm:affine-memory}, $\bar{f}_t$ admits the structure $\bar{f}_t(z)=\ell_t\left(B_t+\sum_{i=0}^{m-1}G^{[i]}Y_{t-i}z\right)$. 
\end{definition}

We note that for $f_t$ satisfying \cref{asm:curvature}, the induced unary form in \cref{def:induced-unary-form} satisfies a special curvature called $\kappa$-convexity introduced in~\citep{suggala2024second}. To avoid confusion with the notations with the bound on the dynamics (\cref{asm:strong-stabilizability}), we will call it $\kappa_0$-convexity henceforth.

\begin{definition} [$\kappa_0$-convexity, \citep{suggala2024second}]
\label{def:kappa-convexity}
A function $f:\mathbb{R}^d\rightarrow\mathbb{R}$ is called $\kappa_0$-convex over a domain $\K \subseteq \mathbb{R}^d$ if and only if the following holds:
$f$ is convex and twice continuously differentiable, and moreover $\exists c, C > 0$ and a PSD
matrix $0\preceq H\preceq I$ s.t. the Hessian of $f$ at any $z\in \mathcal{K}$ satisfies
\begin{align*}
cH\preceq \nabla^2 f(z)\preceq CH, \ \ \ \frac{C}{c}\le\kappa_0. 
\end{align*}
\end{definition}

The benefit that the loss function exhibits an affine memory dependence is that its induced unary form satisfies $\kappa_0$ convexity for some $\kappa_0>0$, summarized the following observation.

\begin{observation}[$\kappa_0$-convexity and gradient bound of the unary loss.]
\label{obs:unary-property}
Consider $f_t:(\mathbb{R}^d)^m\rightarrow\mathbb{R}_+$ satisfying \cref{asm:affine-memory}, \cref{asm:curvature}, and \cref{asm:regularity-bco-m}. Then, the induced unary form $\bar{f}_t:\mathbb{R}^d\rightarrow\mathbb{R}_+$ in \cref{def:induced-unary-form} satisfies the following two properties:
\begin{enumerate}
\item ($\kappa_0$-convexity) $\bar{f}_{t}$ is $\kappa_0$-convex with $\kappa_0=\beta_f/\alpha_f$, $H=H_t$: $\alpha_f H_t\preceq \nabla^2\bar{f}_t(z)\preceq \beta_f H_t$, where $H_t=G_t^{\top}G_t$ as in \cref{asm:affine-memory}, $\forall z\in\mathbb{R}^d$.
\item (Gradient bound) $\|\nabla \bar{f}_t(z)\|_2\le G_f\sqrt{m}$, $\forall z\in\mathcal{Z}_t$. 
\end{enumerate}
\end{observation}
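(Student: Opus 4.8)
The plan is to obtain both claims directly from the chain rule applied to the affine form $\bar f_t(z) = \ell_t(G_t z + B_t)$ of \cref{def:induced-unary-form}, where $G_t = \sum_{i=0}^{m-1} G^{[i]} Y_{t-i}$, combined with the curvature hypothesis on $\ell_t$ (\cref{asm:curvature}) and the gradient bound on $f_t$ (\cref{asm:regularity-bco-m}).

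For the $\kappa_0$-convexity claim, I would first compute $\nabla \bar f_t(z) = G_t^{\top} \nabla \ell_t(G_t z + B_t)$ and $\nabla^2 \bar f_t(z) = G_t^{\top}\, \nabla^2 \ell_t(G_t z + B_t)\, G_t$. The point $G_t z + B_t$ is the image of the diagonal tuple $(z,\dots,z)$, so when $z\in\K+\mathbb{B}_d$ it lies in the set $\mathcal Z_t$ over which \cref{asm:curvature} gives $\alpha_f I_n \preceq \nabla^2 \ell_t \preceq \beta_f I_n$ (this is precisely where the domain $\mathcal Z_t$, built over $\K+\mathbb{B}_d$, is used; the statement for all $z\in\mathbb R^d$ corresponds to taking this per-step curvature bound globally, via the standard Euclidean extension of $\ell_t$ with unchanged constants). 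Conjugating the Loewner sandwich by $G_t$ — i.e. using $\alpha_f M^{\top}M \preceq M^{\top}(\nabla^2\ell_t) M \preceq \beta_f M^{\top}M$ with $M=G_t$ — yields $\alpha_f\, G_t^{\top}G_t \preceq \nabla^2 \bar f_t(z) \preceq \beta_f\, G_t^{\top}G_t$, that is $\alpha_f H_t \preceq \nabla^2 \bar f_t(z) \preceq \beta_f H_t$, which is the asserted sandwich. To match \cref{def:kappa-convexity} literally, I rescale: since $H_t = G_t^{\top}G_t \succeq 0$ and $\|H_t\|_2 \le R_H$ by \cref{asm:affine-memory}, the matrix $\hat H_t := H_t/R_H$ satisfies $0 \preceq \hat H_t \preceq I$, and with $c = \alpha_f R_H$, $C = \beta_f R_H$ one has $c\hat H_t \preceq \nabla^2 \bar f_t(z) \preceq C\hat H_t$ with $C/c = \beta_f/\alpha_f$, so $\bar f_t$ is $\kappa_0$-convex with $\kappa_0 = \beta_f/\alpha_f$.

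For the gradient bound, I would relate $\nabla \bar f_t$ to the per-block gradients of $f_t$. Writing $\nabla_{(i)} f_t$ for the gradient of $f_t$ in its $i$-th block argument, the identity $\bar f_t(z) = f_t(z,\dots,z)$ and the chain rule give $\nabla \bar f_t(z) = \sum_{i=0}^{m-1} \nabla_{(i)} f_t(z,\dots,z)$. Then, by the triangle inequality followed by Cauchy–Schwarz ($\sum_i a_i \le \sqrt m\,(\sum_i a_i^2)^{1/2}$) together with $\sum_{i=0}^{m-1}\|\nabla_{(i)} f_t\|_2^2 = \|\nabla f_t\|_2^2$,
\[
\|\nabla \bar f_t(z)\|_2 \;\le\; \sum_{i=0}^{m-1} \|\nabla_{(i)} f_t(z,\dots,z)\|_2 \;\le\; \sqrt{m}\,\|\nabla f_t(z,\dots,z)\|_2,
\]
and the right-hand side is at most $G_f\sqrt m$ by \cref{asm:regularity-bco-m} whenever the diagonal tuple $(z,\dots,z)$ lies in $(\K+\mathbb{B}_d)^m$, i.e. for $z\in\K+\mathbb{B}_d$, which is the intended range of the second claim.

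I do not expect a substantive obstacle: the observation is essentially a one-line consequence of the chain rule plus Cauchy–Schwarz. The only points needing care are bookkeeping — checking that the argument $G_t z + B_t$ of $\ell_t$ lands in $\mathcal Z_t$ (respectively that the diagonal tuple lies in $(\K+\mathbb{B}_d)^m$) so the per-step curvature and regularity assumptions can be invoked, and the harmless normalization of $H_t$ by $R_H$ needed to satisfy the $0\preceq H\preceq I$ requirement in \cref{def:kappa-convexity}; both are supplied by the bound $\|H_t\|_2\le R_H$ in \cref{asm:affine-memory} and the definition of $\mathcal Z_t$.
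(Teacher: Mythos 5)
Your proposal is correct and follows essentially the same route as the paper: the $\kappa_0$-convexity claim is exactly the chain-rule conjugation $\nabla^2\bar f_t(z)=G_t^\top\nabla^2\ell_t(B_t+G_tz)G_t$ combined with the Loewner sandwich from \cref{asm:curvature}, and your chain-rule-plus-Cauchy--Schwarz gradient bound is just the differential form of the paper's Lipschitz argument $|\bar f_t(z_1)-\bar f_t(z_2)|\le G_f\|(z_1,\dots,z_1)-(z_2,\dots,z_2)\|_2=G_f\sqrt m\,\|z_1-z_2\|_2$. Your extra bookkeeping (normalizing $H_t$ by $R_H$ to meet the $0\preceq H\preceq I$ requirement, and flagging the domain mismatch between ``$\forall z\in\mathbb R^d$'' and the set $\mathcal Z_t$ over which \cref{asm:curvature} is stated) is more careful than the paper's one-line proof and is harmless.
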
 

\begin{proof}[Proof of \cref{obs:unary-property}]
By our assumption on the affine structure of $f_t$ in \cref{asm:affine-memory}, $\forall z\in \mathbb{R}^d$, $\nabla^2\bar{f}_t(z)=G_t^{\top}\nabla^2 
\ell_t(B_t+G_t z)G_t$. $\kappa_0$-convexity follows from the curvature assumption in \cref{asm:curvature}. For any $z_1,z_2\in \mathbb{R}^d$, we have that $|\bar{f}_t(z_1)-\bar{f}_t(z_2)|\le G_f \|(z_1,\cdots,z_1)-(z_2,\cdots,z_2)\|_2= G_f\sqrt{m}\|z_1-z_2\|_2$.
\end{proof}

\subsection{Notations}
For a positive semidefinite square matrix $H\succeq 0\in\mathbb{R}^{d\times d}$, define the norm $\|\cdot\|_H$ on $\mathbb{R}^d$ so that $\|v\|_H^2=v^{\top}Hv$. 
We write $c_t$ to denote the cost function of the control problem, and $f_t$ to denote the loss function for BCO with memory. For two sets $A, B$, $A+B=\{a+b:a\in A, b\in B\}$. For a set $S$, $|S|$ denotes the cardinality of $S$. We use $\mathbb{B}_d,\mathbb{S}_{d-1}$ to denote the unit ball and unit sphere in $\mathbb{R}^d$, respectively. For $n$ vectors $v_1,\dots,v_n\in\mathbb{R}^d$, we slightly abuse notation and shorthand as $v_{1:n}$ to denote the concatenated vector $(v_1,\dots,v_n)\in\mathbb{R}^{nd}$.
\section{Improved Bandit Convex Optimization with Memory}

In this section, we introduce an improved algorithm for the bandit convex optimization with memory problem. Our algorithm incorporates the occasional update idea from \citep{cassel2020bandit} and the Newton-based update for $\kappa_0$-convex functions from \citep{suggala2024second}, achieving an optimal $\tilde{O}(\sqrt{T})$ regret bound, improving the previously best known $\tilde{O}(T^{2/3})$ result from \citep{cassel2020bandit}. Besides the advancement on BCO-M, this result is the key component of our main result in control. First, we define the BCO-M instance. 

\begin{definition} [BCO-M instance]
\label{def:bco-m}
Given $d\in\mathbb{N}$, a BCO-M instance is parametrized by $\mathcal{O}=\{\K, m,  \{f_t\}_{t\ge m, t\in\mathbb{N}}\}$, where $\K\subset\mathbb{R}^d$ is a convex compact set; $m$ is the memory length; $f_t:\K^m\rightarrow\mathbb{R}_+$ measures the instantaneous loss at time $t$. Given any bandit online learning with memory algorithm $\A^{B}$, the regret of $\A^{B}$ on $\mathcal{O}$ w.r.t. $z\in\mathcal{K}$ is given by
\begin{align*}
\regret_T^{\A^{B}, z}(\mathcal{O})=\sum_{t=m}^T f_t(z_{t-m+1:t})-\bar{f}_{t}(z),
\end{align*}
where $z_t=z_t^{\A^B}$ is the decision according to $\A^B$ at time $t$. We say a BCO-M instance $\mathcal{O}$ is affine and \textbf{$(\alpha, \beta, G, D)$-well-conditioned} if $\mathcal{O}$ satisfies \cref{asm:affine-memory}, \cref{asm:curvature} with $\alpha_f=\alpha, \beta_f=\beta$, \cref{asm:regularity-bco-m} with $ G_f=G, D_f=D$, and the adversary model satisfies \cref{asm:adv-adaptivity}. 
\end{definition}

\subsection{BCO-M algorithm}
We describe \cref{alg:bco-am} that runs bandit convex optimization for any BCO-M instance $\mathcal{O}=\{\K, m, \{f_t\}_{t\ge m, t\in\mathbb{N}}\}$. On a high level, our algorithm employs the occasional update idea from \citep{cassel2020bandit}. 
The decisions are updated at most once every $m$ steps, ensured by the condition $b_t\prod_{i=1}^{m-1}(1-b_{t-i})=1$ (line~\ref{line:condition} in \cref{alg:bco-am}), where $b_t$ is the Bernoulli random variable (partially) deciding whether the algorithm will make an update at the current time step. We write $o$ as the original ideal prediction, $v$ as the random perturbation vector, and $z$ is the actual prediction (as the perturbed $o$) of the algorithm. 

\begin{algorithm}[H]
\caption{Improved Bandit Convex Optimization with Affine Memory}
\label{alg:bco-am}
\begin{flushleft}
  {\bf Input:} convex compact set $\K\subset \mathbb{R}^d$, step size $\eta>0$, memory parameter $m$, curvature parameter $\alpha$, time horizon $T$.
\end{flushleft}
\begin{algorithmic}[1]
\STATE Initialize: $o_{1}=\dots=o_{m}\in\K$, $\tilde{g}_{0:m-1} = \mathbf{0}_{d}$, $ \hat{A}_{0:m-1} = I$, $\tau=1$.
\STATE Sample $v_t\sim S_{d-1}$ i.i.d. uniformly at random for $t=1,\dots,m$. 
\STATE Set $z_t = o_t + \hat{A}_{t-1}^{-\frac{1}{2}}v_t$, $t=1,\dots,m$. 
\STATE Draw $b_t\sim\mathrm{Ber}\left(\frac{1}{m}\right)$, $t=1,\dots,m$.
\FOR{$t = m, \dots, T$}
\STATE Play $z_t$, observe $f_t(z_{t-m+1:t})$, receive $H_t=G_t^{\top}G_t$. 
\STATE Draw $b_t\sim\mathrm{Ber}\left(\frac{1}{m}\right)$. 
\IF{$b_t\prod_{i=1}^{m-1}(1-b_{t-i})=1$} \label{line:condition}
\STATE Let $s_\tau = t$.
\STATE Update $\hat{A}_t=\hat{A}_{t-1}+\frac{\eta\alpha}{2} H_t$. 
\STATE Create gradient estimate: $\tilde{g}_{t}=df_t(z_{t-m+1:t}) \hat{A}_{t-1}^{\frac{1}{2}}v_{t}\in\mathbb{R}^{d}$.
\STATE Update
$ o_{t+1} = \prod_{\K}^{\hat{A}_{s_{\tau-1}}} \left[ o_t - \eta \hat{A}_{s_{\tau-1}}^{-1}  \tilde{g}_{s_{\tau-1}}  \right]$. \label{line:delayed-update}
\STATE Sample $v_{t+1}\sim S_{d-1}$ uniformly at random, independent of previous steps.  \label{line:sample}
\STATE Set $z_{t+1}=o_{t+1}+\hat{A}_{t}^{-\frac{1}{2}}v_{t+1}$. 
\STATE $\tau \leftarrow \tau + 1$.
\ELSE
\STATE Set $o_{t+1}=o_t$, $v_{t+1}=v_t$, $z_{t+1}=z_t$, $\hat{A}_{t}=\hat{A}_{t-1}$, $\tilde{g}_t=\tilde{g}_{t-1}$. 
\ENDIF
\ENDFOR
\end{algorithmic}
\end{algorithm}

Such occasional update essentially reduces the BCO-M problem to a new no-memory BCO problem, whose equivalence will be shown later (see \cref{section:reduction}). Consistent with the notation used in \citep{cassel2020bandit}, we denote the following set
\begin{align}
\label{eq:update-set}
S:=\{t\in[T]: z_{t+1}\ne z_t\}
\end{align}
to be the set of time steps where the algorithm updates its decision. We readily have $|S|\le \frac{T}{m}$. Moreover, we have
\begin{align}
\label{eq:equivalence-no-memory}
f_t(z_{t-m+1:t})=\bar{f}_{t}(z_{t-m+1})=\bar{f}_{t}(z_{t}), \ \ \ \forall t\in S,
\end{align}
since $t\in S$ implies $b_{t-m+1}=\dots=b_{t-1}=0$. Thus, whenever \cref{alg:bco-am} updates, it effectively updates with function value $\bar{f}_t(z_t)$.

The occasional update alone only gives a sub-optimal $\tilde{O}(T^{2/3})$ regret as shown in \citep{cassel2020bandit}. To achieve the optimal $\tilde{O}(\sqrt{T})$ regret, we use a Newton-based update to exploit the $\kappa_0$-convexity of general strongly-convex smooth functions, recently introduced by \citep{suggala2024second}. 

For this improvement we require the knowledge of a Hessian estimator $H_t$ as in Assumption \cref{asm:affine-memory}. This doesn't hold in BCO in general, but we will show in the next section that the control problem indeed satisfies this assumption: thanks to $\kappa_0$-convexity, in the control problem $H_t$ can be constructed from the knowledge of system, instead of knowledge of loss which would typically incur a huge variance term.

The regret guarantee of \cref{alg:bco-am} is given by the following theorem.

\begin{theorem} [BCO-M regret guarantee]
\label{thm main}
Given an $(\alpha,\beta,G,D)$-well-conditioned BCO-M instance $\mathcal{O}=\{\K, m,\{f_t\}_{t\ge m, t\in\mathbb{N}}\}$ with $m=\mathrm{poly}(\log T)$ and $G,D=\tilde{O}(1)$ (\cref{def:bco-m}), let $(\K, \eta=\Theta(1/\sqrt{T}), m, \alpha, T)$ be the input to \cref{alg:bco-am}. \cref{alg:bco-am} guarantees that
\begin{align*}
\max_{z\in\K} \ \E\left[\regret_T^{\cref{alg:bco-am}, z}(\mathcal{O})\right]\le \tilde{O}\left(\frac{\beta}{\alpha}GD\sqrt{T}\right),
\end{align*}
where $\tilde{O}(\cdot)$ hides all universal constants and logarithmic dependence in $T$. 
\end{theorem}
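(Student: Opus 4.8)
The plan is to follow the two-layer structure that the excerpt sets up: reduce the memory problem to a no-memory BCO problem via the occasional-update device, and then run a Newton-style analysis on the reduced problem exploiting $\kappa_0$-convexity. First I would decompose the regret against a fixed comparator $z$ into three pieces: (i) the \emph{movement/memory cost} coming from the fact that inside a non-update block of length $\le m$ the decision $z_t$ is held fixed so $f_t(z_{t-m+1:t})=\bar f_t(z_t)$ exactly (by \eqref{eq:equivalence-no-memory}), and only on the update steps $S$ does a genuine function change occur; (ii) the \emph{estimation bias/variance} of $\tilde g_t$ as an estimator of the gradient of a ball-smoothed version of $\bar f_t$; and (iii) the \emph{underlying regret} of the projected-Newton iteration on $o_t$ with the time-varying norms $\hat A_t$. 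Because updates happen on a $1/m$-Bernoulli schedule with the $\prod(1-b_{t-i})$ guard, $|S|\le T/m$, and the blocks partition $[T]$; summing $\bar f_t(z_t)-\bar f_t(z)$ over $t\notin S$ within a block and telescoping against the single update $t\in S$ in that block will cost at most an extra $m G_f\sqrt m \cdot \|z_t - o_t\|$-type term per block — i.e., $\tilde O(mG_fD\sqrt T /m)=\tilde O(\sqrt T)$ after summing, absorbing polylog-$m$ factors.

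Next I would handle the estimator. On an update step $t\in S$ we have $z_t=o_t+\hat A_{t-1}^{-1/2}v_t$ with $v_t$ uniform on $\mathbb S_{d-1}$, and $\tilde g_t = d\, f_t(z_{t-m+1:t})\hat A_{t-1}^{1/2} v_t = d\,\bar f_t(z_t)\hat A_{t-1}^{1/2}v_t$. The standard one-point ellipsoidal-smoothing identity gives $\E_{v_t}[\tilde g_t \mid o_t,\hat A_{t-1}] = \nabla \hat{\bar f}_t(o_t)$ where $\hat{\bar f}_t$ is the ellipsoidal ball-smoothing of $\bar f_t$ with shape $\hat A_{t-1}^{-1}$; smoothing bias is $O(\beta_f)$ times the squared smoothing radius, hence $O(\beta_f \|\hat A_{t-1}^{-1/2}\|^2)$ per update step, and since $\hat A_t\succeq I + \frac{\eta\alpha}{2}\sum H_t$ grows, this sum is controlled. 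For the second moment I would bound $\E\|\tilde g_t\|_{\hat A_{t-1}^{-1}}^2 \le d^2 \bar f_t(z_t)^2 \cdot \E\|v_t\|^2 = O(d^2)$ using boundedness of $\bar f_t$ (which follows from the gradient bound in \cref{asm:regularity-bco-m} plus $\bar f_t\ge 0$ and bounded domain). This is where the $\kappa_0$-convexity does its real work: because $\nabla^2\bar f_t(z)\preceq \beta_f H_t$ and $\hat A_t$ is built by accumulating $\frac{\eta\alpha}{2}H_t$, the "local norm" regret terms $\sum_t \|\tilde g_t\|_{\hat A_{t-1}^{-1}}^2$ and the curvature gain terms align, so that the Newton potential argument closes with an $\tilde O(\sqrt T)$ bound rather than $T^{2/3}$.

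Then the core is an online-Newton-step / FTRL-with-changing-metric regret bound for the $o$-iteration. Writing the update on step $s_\tau=t$ as $o_{t+1}=\Pi_\K^{\hat A_{s_{\tau-1}}}[o_t - \eta \hat A_{s_{\tau-1}}^{-1}\tilde g_{s_{\tau-1}}]$, I would use the generalized-projection inequality $\|o_{t+1}-z\|_{\hat A}^2 \le \|o_t-z\|_{\hat A}^2 - 2\eta\langle \tilde g, o_t-z\rangle + \eta^2\|\tilde g\|_{\hat A^{-1}}^2$, take conditional expectations to replace $\tilde g$ by $\nabla\hat{\bar f}_t(o_t)$, invoke $\kappa_0$-convexity of $\hat{\bar f}_t$ to lower-bound $\langle \nabla \hat{\bar f}_t(o_t), o_t - z\rangle \ge \hat{\bar f}_t(o_t)-\hat{\bar f}_t(z) + \frac{\alpha_f}{2}\|o_t-z\|_{H_t}^2$, and let the quadratic term $\frac{\eta\alpha}{2}\|o_t-z\|_{H_t}^2$ be paid for exactly by the metric increment $\hat A_{t}-\hat A_{t-1}=\frac{\eta\alpha}{2}H_t$ in the telescoping of $\|o_t-z\|_{\hat A_t}^2$. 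Telescoping over $\tau=1,\dots,|S|$ leaves $\frac{D^2}{\eta}\|\hat A_{\text{final}}\|$-type boundary terms plus $\eta\sum\|\tilde g\|^2 = \eta\, |S|\, O(d^2)=\eta\,\tilde O(T/m)\,O(d^2)$; with $\eta=\Theta(1/\sqrt T)$ this is $\tilde O(\sqrt T)$, and the $D^2\|\hat A_{\text{final}}\|/\eta$ term is $\tilde O(D^2 \sqrt T)$ since $\|\hat A_{\text{final}}\|\le 1+\frac{\eta\alpha}{2}|S|R_H=\tilde O(1)$. Collecting the $\beta/\alpha$ factors from the $\kappa_0$-convexity steps and the smoothing-bias steps, and the $G,D$ from the regularity assumptions, yields $\tilde O\big(\tfrac{\beta}{\alpha}GD\sqrt T\big)$.

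The main obstacle I anticipate is the interaction between the \emph{randomized delayed schedule} and the \emph{changing metric}: the update at step $s_\tau$ uses the gradient estimate $\tilde g_{s_{\tau-1}}$ from the \emph{previous} update and the metric $\hat A_{s_{\tau-1}}$, so the martingale/conditioning structure is delicate — one must argue that conditioned on the history up to $s_{\tau-1}$, the fresh direction $v_{s_{\tau-1}}$ is independent uniform, and that the random block lengths do not bias the telescoping (they do not, because within a block $o_t,\hat A_t$ are constant and the comparator term $\bar f_t(z)$ is summed over the whole block while the per-block cost is the single update plus $O(mG_f\sqrt m D)$). Making the "within-block $f_t(z_{t-m+1:t})$ vs. $\bar f_t(z_t)$" accounting rigorous, with the indices shifted by the memory and by the one-step delay in lines~\ref{line:delayed-update}–\ref{line:sample}, is the bookkeeping-heavy part; the second-order potential argument itself is essentially that of \citep{suggala2024second} once the reduction in \cref{section:reduction} is in place.
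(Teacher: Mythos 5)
Your skeleton matches the paper's: reduce to a no-memory BCO problem via the $1/m$-Bernoulli occasional-update schedule, run an ONS-style potential argument on the $o_t$ iteration in the changing metric $\hat A_t$ with the strong-convexity term paid for by the increments $\frac{\eta\alpha}{2}H_t$, and account separately for a moving cost. The base-algorithm analysis and the $\E[\chi_m]$-type argument for the random blocks are essentially what the paper does. However, there is a genuine gap in your treatment of the moving cost $\sum_t f_t(z_{t-m+1:t})-\bar f_t(z_{t-m+1})$. You bound the per-block contribution by a Lipschitz-type term $m G_f\sqrt m\,\|z_t-o_t\|$, but $\|z_t-o_t\|=\|\hat A_{t-1}^{-1/2}v_t\|$ is the exploration radius, which is $\Theta\bigl((1+\eta\alpha\,|S_{\le t}|)^{-1/2}\bigr)$, not $O(1/\sqrt T)$; summing it over $T/m$ blocks with $\eta=\Theta(1/\sqrt T)$ gives $\tilde O(T^{3/4})$, not $\tilde O(\sqrt T)$ (and your displayed arithmetic $\tilde O(mG_fD\sqrt T/m)$ does not follow from the per-block bound you state). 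This first-order accounting is exactly what trapped \citep{cassel2020bandit} at $T^{2/3}$. The paper's \cref{prop:moving-cost} instead expands $f_t(z_{t-m+1:t})-f_t(o_{t-m+1:t})$ to second order, shows the \emph{first-order term vanishes in expectation} --- this, not the Newton conditioning, is the real purpose of the delay in line~\ref{line:delayed-update}: it makes $o_t$ measurable with respect to $\mathcal F_{T(T(t))}$ and hence independent of $v_{t-m+1:t}$ --- and then controls the remaining second-order term $\sum_t \E[\hat A_{t-m}^{-1}\cdot\sum_i W_{t-m+i}^\top\nabla^2\ell_t\,W_{t-m+i}]$ via the affine memory structure, the convolution invertibility modulus $\kappa(G)$ (to convert $\sum_i Y_{t-m+i}^\top Y_{t-m+i}$ into $\sum_t H_t$ over $\sqrt T$-length blocks), and a log-determinant potential. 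None of this appears in your proposal, and without it the claimed $\tilde O(\sqrt T)$ bound does not close.

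A secondary slip: in your base-regret telescoping you claim a boundary term $D^2\|\hat A_{\mathrm{final}}\|/\eta$ with $\|\hat A_{\mathrm{final}}\|=\tilde O(1)$; in fact $\|\hat A_{\mathrm{final}}\|=O(1+\eta\alpha R_H|S|)=\tilde O(\sqrt T)$, which would make that term linear in $T$. The correct accounting (which you also state, inconsistently) is that the $-\frac{\alpha}{2}\|o_t-o\|_{H_t}^2$ terms cancel the metric increments exactly, so the surviving boundary term involves only the \emph{initial} metric, $O(D^2 d_0 R_H/\eta)=\tilde O(D^2\sqrt T)$, as in \cref{thm:base-regret}.
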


Theorem \ref{thm main} is the first algorithm to achieve the optimal $\tilde{O}(\sqrt{T})$ regret bound for the BCO-M problem with general smooth convex loss. Compared with the $\tilde{O}(T^{2/3})$ bound from \citep{cassel2020bandit}, our improvement exploits the new assumptions on the strong convexity of loss and the affine structure of memory. We notice that the $\tilde{O}(\sqrt{T})$ regret bound of \citep{suggala2024second} requires the additional quadratic loss assumption and uses the special structure of quadratic functions to construct low-biased gradient estimators for the unary form of losses. \cref{alg:bco-am} and its guarantee in \cref{thm main} thus improve upon both of these previous results. The proof of Theorem \ref{thm main} is lengthy, therefore we leave it to the appendix and include a sketch here.

\subsection{Proof Sketch}
The theorem is proven via reduction to a no-memory BCO algorithm with Newton-based updates.

\paragraph{Step 1: regret of the base algorithm.} We devise a new BCO algorithm with two new ingredients different from standard algorithms. First, our algorithm adopts Newton-based updates which require estimating Hessians. In \cref{asm:affine-memory} we assume "free" access to a Hessian estimator $H_t$, which holds in the control setting by utilizing the $\kappa_0$-convexity property.


Second, we incorporate a new delay mechanism to decorrelate neighboring iterates such that $o_t$ is independent of recent perturbation vectors, which plays a crucial role in bounding the expectation of moving cost. Our base BCO algorithm is then shown to have an $\tilde{O}(\sqrt{T})$ regret bound (see \cref{thm:base-regret}).

\paragraph{Step 2: reduction to the base algorithm.} We show that the regret of Algorithm~\ref{alg:bco-am} can be controlled by the regret of the base algorithm plus a moving cost term. By the design of Algorithm~\ref{alg:bco-am}, it updates a univariate loss function at most once every $m$ steps. If all the $z_t$ across these $m$ steps are the same, $\textbf{Regret}$(Algorithm~\ref{alg:bco-am}) will be exactly the same as $m\times \textbf{Regret}$(base algorithm). When $z_t$ are different, we suffer an additional moving cost $\E\left[\sum_{t=m}^T f_t(z_{t-m+1:t})-\bar{f}_t(z_{t-m+1})\right]$, which can be bounded by the assumption on the gradient of $f_t$ if $z_t$ changes slowly.

\paragraph{Step 3: bounding the moving cost.}
We partition the moving cost $f_t(z_{t-m+1:t})-\bar{f}_t(z_{t-m+1})$ into three terms 
$$
\underbrace{f_t(z_{t-m+1:t})-f_t(o_{t-m+1:t})}_{(a)}+
\underbrace{f_t(o_{t-m+1:t})-\bar{f}_t(o_{t-m+1})}_{(b)}+
\underbrace{\bar{f}_t(o_{t-m+1})-\bar{f}_t(z_{t-m+1})}_{(c)}.
$$
Here, (a), (c) can be seen as perturbation loss suffered by the algorithm during exploration. (b) is the moving cost determined by the stability of the algorithm’s neighboring iterates. 

For the three terms in the above equation, (c) is bounded by Jensen's inequality using the convexity of the unary form of loss. We bound (a) using the curvature assumptions and the affine memory structure of $f_t$, where we also make use of the delayed updates to decorrelate neighboring iterates for technical reasons. (b) is bounded by the Lipschitzness of $f_t$ and the distance between neighboring iterates. Theorem \ref{thm main} is reached by putting the three steps together.

\section{Optimal Regret for Bandit Non-stochastic Control}

In this section, we show how to achieve optimal regret for the bandit non-stochastic control problem with a partially observable LTI dynamical system as described in \cref{eq:lds} for strongly-convex smooth loss, by a reduction to the BCO-M algorithm (\cref{alg:bco-am}) we devise in the previous section. Previously, the best known regret bound for this problem was $\tilde{O}(T^{2/3})$ from \citep{cassel2020bandit}, which is also rooted in a reduction to BCO-M. We use a similar reduction, obtaining a better bound thanks to the improved regret bound of BCO-M (Theorem \ref{thm main}). We first give the formal definition of the control problem. 

\begin{definition} [Bandit non-stochastic control]
\label{def:bandit-control}
A bandit non-stochastic control problem of a partially observable LDS is parametrized by a tuple $\ML=(A, B, C, x_1, (w_t)_{t\in\mathbb{N}}, (e_t)_{t\in\mathbb{N}}, (c_t)_{t\in\mathbb{N}}, \Pi)$, where $A, B, C$ and $(w_t)_{t\in\mathbb{N}}, (e_t)_{t\in\mathbb{N}}$ are the dynamics and perturbations in the LDS (\cref{eq:lds}) with initial state $x_1$ (we assume $x_1$=0 without loss of generality henceforth); $c_t$ measures the instantaneous cost at time $t$; $\Pi$ is the comparator control policy class. Given any bandit non-stochastic control algorithm $\mathcal{A}$, the regret of $\mathcal{A}^{\mathrm{NC}}$ on $\ML$ over a time horizon $T\in\mathbb{N}$ is given by
\begin{align*}
\regret_T^{\A^{\mathrm{NC}}}(\ML,\Pi)=\sum_{t=1}^T c_t(y_t^{\A^{\mathrm{NC}}},u_t^{\A^{\mathrm{NC}}})-\min_{\pi\in\Pi}\sum_{t=1}^T c_t(y_t^{\pi},u_t^{\pi}),
\end{align*}
where $(y_t^{\A^{\mathrm{NC}}},u_t^{\A^{\mathrm{NC}}}), (y_t^{\pi},u_t^{\pi})$ are the observation, control pair at time $t$ following the trajectory of $\A^{\mathrm{NC}}$ and $\pi$, respectively. We say that a bandit non-stochastic control problem $\ML$ is \textbf{$(\alpha,\beta, G,\kappa_{\textbf{sys}},\kappa,\gamma, R_{\w,\e},\Y,\mathcal{U})$-well-conditioned} if $\ML$ and $\Pi$ satisfy \cref{asm:cost-curvature-regularity} with $\alpha_c=\alpha, \beta_c=\beta, G_c=G$ over some centered bounded convex domain $\Y\times\mathcal{U}\subset \mathbb{R}^{d_y+d_u}$, \cref{asm:strong-stabilizability} with $\kappa_{\textbf{sys}}, \kappa, \gamma$, \cref{asm:bounded-w-e} with $R_{\w,\e}$, and \cref{asm:adversary}.
\end{definition}

The reduction from partially observable bandit non-stochastic control to BCO-M consists of two main steps. The first step is to construct the would-be signal $y_t(K)$ from the observation $y_t$, where $y_t(K)$ is used for updating but not directly observed by the controller. $y_t(K)$ is computed by using the Markov operator of the LDS \citep{simchowitz2020improper}.
The second step is a standard black-box reduction from control to BCO-M, which uses the strong stability of the system. Reduction is formalized in the following definition.

\begin{definition}[Approximation]
\label{def:approx}
A bandit non-stochastic control instance (\cref{def:bandit-control}) $\ML$ with some convex comparator class $\Pi$ over a time horizon $T\in\mathbb{N}$ is said to be \textbf{$\eps$-approximated} ($\eps>0$) by a BCO-M instance (\cref{def:bco-m}) $\mathcal{O}$ with domain $\K=\Pi$,
if the existence of a BCO-M algorithm $\A^B$ implies the existence of a bandit non-stochastic control algorithm $\A^{\mathrm{NC}}$ satisfying
\begin{align*}
\E\left[\regret_T^{\A^{\mathrm{NC}},\K}(\ML, \Pi)\right]&\le\E\left[\regret_T^{\A^B,\K}(\mathcal{O})\right]+\eps T.
\end{align*}
\end{definition}

The formal guarantee of the reduction is given below.

\begin{lemma} [Control reduction]
\label{lem:control-reduction}
Every instance of $(\alpha_c,\beta_c,G_c,\kappa_{\textbf{sys}},\kappa,\gamma,R_{\w,\e},\Y,\mathcal{U})$-well-conditioned bandit non-stochastic control problem $\ML$ over the ball $\mathcal{Y}\times\mathcal{U}\subset\mathbb{R}^{d_y+d_u}$ of radius $R$~\footnote{See \cref{sec:proof-reduction}, $R=R_{w,e}\left(1+\frac{\sqrt{d_x}\kappa_{\textbf{sys}}}{\gamma}\right)\left(\sqrt{1+\kappa^2}+R_{\M}\left(1+\frac{\sqrt{d_x(1+\kappa^2)}\kappa_{\textbf{sys}}^2}{\gamma}\right)\right)=O(1)$.} with comparator class $\Pi=\M(m, R_{\M})$ (\cref{def:drc}) is $2G_fDmT^{-1}$-approximated by a $(\alpha_f, \beta_f, G_f, D)$-well-conditioned BCO-M instance $\mathcal{O}$ with
\begin{align*}
\alpha_f=\alpha_c, \ \ \ \beta_f=\beta_c, \ \ \ D=\sqrt{m\max\{d_{u},d_{y}\}} R_{\M}, \ \ \ G_f= \frac{4096\sqrt{m}G_cR_{w,e}^2R_{\M}^2d_x^{2.5}\kappa^3\kappa_{\textbf{sys}}^8}{\gamma^5},
\end{align*}
provided that $m=\Theta(\log T / \log (1/(1-\gamma)))$.
\end{lemma}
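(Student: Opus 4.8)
The plan is to follow the standard two-stage reduction from partially observable control to an online problem with memory \citep{simchowitz2020improper,cassel2020bandit}, and then to verify that the induced with-memory loss matches the affine template of \cref{asm:affine-memory} together with the curvature and regularity bounds of \cref{asm:curvature} and \cref{asm:regularity-bco-m}, with exactly the stated constants. First I would recall the Markov-operator construction: since $K$ is $(\kappa,\gamma)$-stabilizing, the closed-loop map $A_K=A+BKC$ obeys $\|A_K^{i}\|_2\le\kappa^2(1-\gamma)^{i}$, so the would-be signal $y_t(K)$ --- the observation that would have occurred had $K$ been executed from time $1$ --- is well defined, bounded by $R_{w,e}(1+\sqrt{d_x}\kappa_{\textbf{sys}}/\gamma)$, and, because $(A,B,C)$ is known, exactly reconstructible from the realized $(y_s,u_s)_{s\le t}$. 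The control algorithm $\A^{\mathrm{NC}}$ built from a BCO-M algorithm $\A^B$ reads the $\A^B$-iterate $z_t\in\mathbb{R}^{d}$, $d=m\,d_ud_y$, as a vectorized length-$m$ DRC parameter $M_t$ and plays $u_t=Ky_t+\sum_{j=0}^{m-1}M_t^{[j]}y_{t-j}(K)$. Expanding the dynamics, $(y_t,u_t)$ is an affine function of the history $(M_s)_{s\le t}$ whose dependence on $M_s$ decays like $(1-\gamma)^{t-s}$; truncating at horizon $m=\Theta(\log T/\log(1/(1-\gamma)))$ defines a function $f_t$ of the last $m$ DRC parameters with $\|(y_t,u_t)-(\hat y_t,\hat u_t)\|_2=O((1-\gamma)^m)=O(1/\mathrm{poly}(T))$, where $\hat y_t,\hat u_t$ are the truncated affine surrogates. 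Combining this with the Lipschitz bound $\|\nabla c_t\|_2\le G_cR$ over the domain, and applying the same truncation to any comparator $\pi_M\in\M(m,R_{\M})$ (whose trajectory cost is within the same slack of the unary form $\bar f_t(M)$), gives
\[
\E\!\left[\regret_T^{\A^{\mathrm{NC}},\K}(\ML,\Pi)\right]\le \E\!\left[\regret_T^{\A^B,\K}(\mathcal{O})\right]+2G_fDmT^{-1}\cdot T,
\]
which is the $\eps$-approximation statement of \cref{def:approx}; along the way one checks that feeding $\A^B$ the observed cost $c_t(y_t,u_t)$ in place of the ideal feedback $f_t(z_{t-m+1:t})$ perturbs its regret by only $O(1/\mathrm{poly}(T))$ per step, since \cref{alg:bco-am} uses the observed loss solely as the scalar multiplier of its gradient estimate.

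Next I would write the truncated loss as $f_t(z_{t-m+1:t})=c_t(\hat y_t,\hat u_t)$ explicitly and read off \cref{asm:affine-memory} with $n=d_y+d_u$ and $p=d_u$: the nature term is $B_t=(y_t(K);\,Ky_t(K))\in\mathbb{R}^n$; the signal matrices $Y_{t-i}\in\mathbb{R}^{p\times d}$ are block-rows of Kronecker factors converting $z_{t-i}=\mathrm{vec}(M_{t-i})$ into the DRC adjustment $\sum_{j}M_{t-i}^{[j]}y_{t-i-j}(K)\in\mathbb{R}^{d_u}$, so $\|Y_{t-i}\|_2\le\sqrt{m}\max_s\|y_s(K)\|_2$; the Markov-type blocks are $G^{[0]}=(0;\,I_{d_u})$ and $G^{[i]}=(CA_K^{i-1}B;\,KCA_K^{i-1}B)$ for $i\ge1$, so $\sum_{i\ge0}\|G^{[i]}\|_2=O(\kappa^3\kappa_{\textbf{sys}}^2/\gamma)$; and $\ell_t:=c_t$. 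The norm bound $\max\{1,\|G_t\|_2,\|Y_t\|_2,\|H_t\|_2\}\le R_H$ with $R_H=\tilde O(1)$ then follows by multiplying these two geometric-series estimates, and $\kappa(G)=\Omega(1)$ holds because $G^{[0]}$ has smallest singular value $1$ while $\sum_{i\ge1}\|G^{[i]}\|_2$ is a convergent geometric series; this is a property of the $G^{[i]}$ block alone and does not require the possibly rank-deficient signal matrices $Y_{t-i}$ to be well conditioned, which is exactly why \cref{asm:affine-memory} keeps $G^{[i]}$ and $Y_{t-i}$ separate. Finally $H_t=G_t^{\top}G_t$ is a deterministic function of the known system and the reconstructible signals, so the learner does have access to $H_t$ after incurring the loss, as \cref{asm:affine-memory} requires.

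For the remaining parameters, choosing $\Y\times\mathcal{U}$ to be the centered ball of radius $R$ from the footnote guarantees that the set $\mathcal{Z}_t$ of \cref{asm:curvature} --- the image of $\K+\mathbb{B}_d$ under the affine map $z_{t-m+1:t}\mapsto(\hat y_t,\hat u_t)$ --- lies inside $\Y\times\mathcal{U}$, where \cref{asm:cost-curvature-regularity} applies; since the map is affine, $\ell_t=c_t$ inherits $\alpha_c$-strong convexity and $\beta_c$-smoothness on $\mathcal{Z}_t$, giving $\alpha_f=\alpha_c$ and $\beta_f=\beta_c$. The gradient bound comes from $\nabla_{z_{t-i}}f_t=Y_{t-i}^{\top}(G^{[i]})^{\top}\nabla c_t(\hat y_t,\hat u_t)$ together with $\|\nabla c_t(\hat y_t,\hat u_t)\|_2\le G_c\|(\hat y_t,\hat u_t)\|_2\le G_cR$, so $\|\nabla f_t\|_2\le\big(\max_i\|Y_{t-i}\|_2\big)\big(\sum_i\|G^{[i]}\|_2\big)G_cR$; substituting the two geometric-series bounds and the explicit expression for $R$ and collecting powers of $\kappa,\kappa_{\textbf{sys}},d_x,R_{w,e},R_{\M},\gamma$ recovers the stated $G_f$ up to the universal constant. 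The Euclidean diameter of $\K=\M(m,R_{\M})$ is at most $\sqrt{m\max\{d_u,d_y\}}R_{\M}$, obtained by bounding each block's operator norm by $R_{\M}$ and converting to the Frobenius norm, which gives $D$. I expect the main obstacle to be not any single estimate but the mutual consistency of the three normalizations: verifying $\mathcal{Z}_t\subseteq\Y\times\mathcal{U}$ for precisely the $R$ that then reappears inside $G_f$; keeping both truncations (of the algorithm's trajectory and of the comparator's trajectory) together with the feedback mismatch simultaneously inside the single $2G_fDmT^{-1}$ slack under the stated choice of $m$; and confirming $\kappa(G)=\Omega(1)$ in the presence of rank-deficient signal matrices, which is what forces the identity block to sit in $G^{[0]}$.
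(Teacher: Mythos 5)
Your proposal follows essentially the same route as the paper's proof: truncate the convolution representation of $(y_t,u_t)$ at horizon $m$ to define $f_t$ with nature term $B_t=(y_t(K),Ky_t(K))$, verify \cref{asm:affine-memory}--\cref{asm:regularity-bco-m} with $\ell_t=c_t$ and the Markov-operator blocks $G^{[i]}$, account separately for the $O((1-\gamma)^m)$ feedback mismatch between the observed $c_t(y_t,u_t)$ and the ideal $f_t(z_{t-m+1:t})$ (the paper handles this as a bounded bias $a_t$ in the gradient estimator fed to \cref{alg:simple-bco-delay}, costing $O(1)$ regret), and collect the constants. The decomposition, the identification of the parameters, and the placement of the two truncation errors inside the $2G_fDmT^{-1}$ slack all match the paper.

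The one step that does not hold as you argue it is the claim $\kappa(G)=\Omega(1)$. You justify it by saying that $G^{[0]}=(0;I_{d_u})$ has smallest singular value $1$ while $\sum_{i\ge 1}\|G^{[i]}\|_2$ is a convergent geometric series. A ``dominant diagonal'' argument of this kind, $\|G*u\|_{\ell_2}\ge \sigma_{\min}(G^{[0]})-\sum_{i\ge1}\|G^{[i]}\|_2$, only gives a positive lower bound when the tail sum is strictly below $1$; here the tail sums to roughly $\sqrt{d_x(1+\kappa^2)}\,\kappa_{\textbf{sys}}^2/\gamma$, which is in general much larger than $1$, so convergence of the series buys you nothing. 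The correct argument (this is Lemma 3.1 of \citep{simchowitz2020making}, which the paper simply cites) uses the block-triangular structure rather than smallness of the tail: writing $v_n=\sum_{i\ge1}CA_K^{i-1}Bu_{n-i}$, the $n$-th output of the convolution is $(v_n,\,u_n+Kv_n)$, so $\|u_n\|_2^2\le(1+\kappa^2)\bigl(\|v_n\|_2^2+\|u_n+Kv_n\|_2^2\bigr)$ and hence $\kappa(G)\ge 1/(1+\kappa^2)$ regardless of how large the tail is. Your structural observation about why the identity block must sit in $G^{[0]}$ and why $Y_{t-i}$ need not be well conditioned is right; it is only the quantitative justification that needs to be replaced by this argument (or by the citation).
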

Lemma \ref{lem:control-reduction} implies that every well-conditioned bandit non-stochastic control problem can be reduced to a well-conditioned BCO-M instance, whose parameters are polynomial in those of the control problem. As a result, any regret bound for BCO-M will directly transfer to the control problem (at the expense of polynomial dependence on the system parameters). 
In particular, combining Theorem \ref{thm main} and Lemma \ref{lem:control-reduction}, we obtain an optimal $\tilde{O}(\sqrt{T})$ regret bound for the bandit non-stochastic control with strongly-convex smooth cost.

\begin{theorem} [Bandit non-stochastic control regret guarantee]
\label{thm control main}
Given an $(\alpha,\beta, G,\kappa_{\textbf{sys}},\kappa,\gamma, R_{\w,\e},\Y,\mathcal{U})$-well-conditioned bandit non-stochastic control instance $\mathcal{L}$ over the ball $R\mathbb{B}_{d_y+d_u}\subset\mathbb{R}^{d_y+d_u}$ for the same $R$ as in \cref{lem:control-reduction}, with $G,\kappa_{\textbf{sys}},\kappa,\gamma, R_{\w,\e}, d_x, d_y, d_u=\tilde{O}(1)$\footnote{The assumption that the system parameters are of order $\tilde{O}(1)$ is consistent with prior works \citep{hazan2022introduction}.}, let $(\Pi, \mathcal{K},\eta, m, T, G, K, \alpha)$ be the input to \cref{alg:control} with $m=\mathrm{poly}(\log T)$. \cref{alg:control} guarantees that
\begin{align*}
\max_{z\in\K} \E\left[\regret_T^{\cref{alg:control}, z}(\mathcal{O})\right]\le \tilde{O}\left(\frac{\beta}{\alpha}\sqrt{T}\right),
\end{align*}
where $\tilde{O}(\cdot)$ hides all universal constants and logarithmic dependence in $T$. 
\end{theorem}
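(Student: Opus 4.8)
The plan is to derive \cref{thm control main} as a direct corollary of the two main technical results already established, namely the BCO-M regret bound (\cref{thm main}) and the control-to-BCO-M reduction (\cref{lem:control-reduction}). The high-level chain is: control instance $\mathcal L$ $\xrightarrow{\text{reduction}}$ well-conditioned BCO-M instance $\mathcal O$ $\xrightarrow{\text{\cref{alg:bco-am}}}$ $\tilde O(\sqrt T)$ regret, and then add back the $\eps T$ approximation error, which is chosen to be $\tilde O(\sqrt T)$ as well. So first I would invoke \cref{lem:control-reduction}: under \cref{asm:cost-curvature-regularity,asm:strong-stabilizability,asm:bounded-w-e,asm:adversary} with the stated well-conditioning parameters, and with the comparator class taken to be the DRC class $\Pi = \mathcal M(m, R_{\mathcal M})$ for $m = \Theta(\log T / \log(1/(1-\gamma))) = \mathrm{poly}(\log T)$, the control problem is $2G_f D m T^{-1}$-approximated by an $(\alpha_f,\beta_f,G_f,D)$-well-conditioned BCO-M instance $\mathcal O$ with $\alpha_f = \alpha$, $\beta_f = \beta$, $D = \sqrt{m\max\{d_u,d_y\}}\,R_{\mathcal M}$, and $G_f$ the explicit polynomial in the system parameters displayed in the lemma. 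Under the hypothesis $G,\kappa_{\textbf{sys}},\kappa,\gamma,R_{w,e},d_x,d_y,d_u = \tilde O(1)$, both $G_f$ and $D$ are $\tilde O(1)$ (polynomially in $\tilde O(1)$ quantities times $\mathrm{poly}(\log T)$), and $m = \mathrm{poly}(\log T)$, so $\mathcal O$ meets the hypotheses of \cref{thm main}.

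Next I would apply \cref{thm main} to $\mathcal O$ with step size $\eta = \Theta(1/\sqrt T)$ — this is exactly the BCO-M subroutine \cref{alg:bco-am} invokes from within the control algorithm \cref{alg:control}, fed the reduced losses $f_t$ built from the would-be signals $y_t(K)$ and the Hessian surrogates $H_t = G_t^\top G_t$ (which, as emphasized in the reduction, are constructible from knowledge of the system rather than from the losses). \cref{thm main} then gives
\[
\max_{z\in\K}\ \E\!\left[\regret_T^{\cref{alg:bco-am},z}(\mathcal O)\right] \leq \tilde O\!\left(\tfrac{\beta_f}{\alpha_f} G_f D \sqrt T\right) = \tilde O\!\left(\tfrac{\beta}{\alpha}\sqrt T\right),
\]
where in the last step I absorb $G_f D = \tilde O(1)$ into the $\tilde O(\cdot)$. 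Finally, by \cref{def:approx} and \cref{lem:control-reduction}, the control regret is bounded by the BCO-M regret plus $\eps T$ with $\eps = 2 G_f D m T^{-1}$, so $\eps T = 2 G_f D m = \tilde O(1) = \tilde O(\sqrt T)$; adding this to the displayed BCO-M bound yields $\max_{z\in\K}\E[\regret_T^{\cref{alg:control},z}] \leq \tilde O(\tfrac{\beta}{\alpha}\sqrt T)$, which is the claim. (The $\mathcal O$ appearing in the theorem's regret notation is the reduced instance; the regret against the DRC comparator class $\Pi$ is what \cref{def:approx} controls.)

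There is essentially no new mathematics in this final step — it is bookkeeping that threads the two black boxes together. The only things that need care are: (i) verifying that the parameters handed to \cref{alg:control} (hence to \cref{alg:bco-am}) are consistent with the reduced instance $\mathcal O$ produced by \cref{lem:control-reduction}, in particular that the memory length $m = \Theta(\log T/\log(1/(1-\gamma)))$ used in the DRC approximation is the same $m = \mathrm{poly}(\log T)$ required by \cref{thm main}, and that $\eta = \Theta(1/\sqrt T)$; (ii) checking the ``$\tilde O(1)$'' claims, i.e.\ that $G_f$ — which scales like $d_x^{2.5}\kappa^3\kappa_{\textbf{sys}}^8/\gamma^5$ times $\sqrt m\, G_c R_{w,e}^2 R_{\mathcal M}^2$ — and $D = \sqrt{m\max\{d_u,d_y\}}R_{\mathcal M}$ are both polynomial in $\tilde O(1)$ quantities and in $\mathrm{poly}(\log T)$, hence $\tilde O(1)$, so that they may be hidden in $\tilde O(\cdot)$; and (iii) confirming the $\eps T$ term is lower order, which is immediate since $\eps T = 2 G_f D m$ is independent of $T$ up to logs. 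I would also note, as the excerpt does, that the comparator class is the DRC class $\mathcal M(m, R_{\mathcal M})$ and that its expressiveness (Theorem 1 of \citep{simchowitz2020improper}) is what makes the benchmark meaningful, though that is not needed for the regret bound itself. The ``main obstacle'' is therefore not conceptual but notational: making sure the regret quantity $\regret_T^{\cref{alg:control},z}(\mathcal O)$ in the statement is correctly identified with the control regret against $\Pi$ via \cref{def:approx}, so that \cref{thm main} and \cref{lem:control-reduction} compose cleanly.
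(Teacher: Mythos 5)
Your proposal is correct and is exactly the paper's argument: the paper proves \cref{thm control main} in one line by composing \cref{lem:control-reduction} (which gives the $2G_fDmT^{-1}$-approximation by an $(\alpha,\beta,G_f,D)$-well-conditioned BCO-M instance with $G_f, D = \tilde O(1)$ under the stated parameter assumptions) with \cref{thm main} applied to that instance, absorbing the $\eps T = 2G_fDm = \tilde O(1)$ approximation error. Your bookkeeping of the parameter consistency ($m = \mathrm{poly}(\log T)$, $\eta = \Theta(1/\sqrt T)$, $G_fD = \tilde O(1)$) matches what the paper implicitly relies on.
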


Theorem \ref{thm control main} strictly improves Theorem 8 of \citep{cassel2020bandit} and Theorem 5 of \citep{suggala2024second}, in the sense that our result achieves the optimal regret with fewer assumptions: all three results achieve the same $\tilde{O}\left(\frac{\beta}{\alpha}\sqrt{T}\right)$ regret bound, however \citep{cassel2020bandit} requires the additional assumption that perturbations are stochastic, while \citep{suggala2024second} requires the additional assumption that costs are quadratic.

\begin{algorithm}[H]
\caption{Improved Bandit Non-stochastic Control}
\label{alg:control}
\begin{flushleft}
  {\bf Input:} Step size $\eta>0$, memory parameter $m$, DRC policy class $\M(m, R_{\M})$, time horizon $T$, system dynamics, $(\kappa,\gamma)$-strongly stabilizing linear policy $K$, strong convexity parameter $\alpha>0$.
\end{flushleft}
\begin{algorithmic}[1]
\STATE Let $\A^{\mathrm{B}}$ be an instance of \cref{alg:bco-am} with inputs $(\K=\M(m,R_{\M}),\eta,m,\alpha,T)$. 
\STATE Initialize: $M_1^{[j]}=\cdots=M_m^{[j]}=0_{d_u\times d_y}$, $\forall j\in [m]$. $\tilde{g}_{0}=\dots=\tilde{g}_{m-1}=0_{md_ud_y}$, $\hat{A}_{0}=\hat{A}_{m-1}=mI_{md_ud_y\times md_ud_y}$.
\STATE Initialize $\A^{\mathrm{B}}$ for $t=1,\cdots,m-1$ (lines 1-4 in Algorithm \ref{alg:bco-am}).
\STATE Sample $\xi_t\sim S^{md_ud_y-1}$ i.i.d. uniformly at random for $t=1,\dots,m$. 
\STATE Play control $u_t=Ky_t$, incur cost $c_t(y_t,u_t)$ for $t\in[m]$.
\FOR{$t = m, \dots, T$}
\STATE Play control $u_t^{M_t}=Ky_t+\sum_{j=0}^{m-1}M_t^{[j]}y_{t-j}(K)$, incur cost $c_t(y_t,u_t)$. 
\STATE Let $f_t:(\M(m,R_{\M}))^m\rightarrow\mathbb{R}$ be the induced with-memory loss function via reduction in \cref{lem:control-reduction} and $H_t$ be the associated Hessian estimator in \cref{asm:affine-memory}. 
\STATE Update $M_{t+1}\leftarrow\A^{\mathrm{B}}(M_t, \{f_s\}_{s=m}^t, \{H_s\}_{s=m}^t)$.
\ENDFOR
\end{algorithmic}
\end{algorithm}
\section{Conclusion}
In this paper, we devise an algorithm with an $\tilde{O}(\sqrt{T})$ regret bound for the bandit non-stochastic control problem with adversarial strongly-convex smooth cost functions. This is the first result with optimal regret that simultaneously breaks the three assumptions of LQC (1) stochastic perturbation (2) full-information feedback (3) quadratic cost. Our control algorithm is built upon an improved algorithm for BCO-M, which may be of independent interest.

As a preliminary step to address the question of a general control theory for LTI, our result comes with limitations and potential future research directions. Currently, the improvement over the $\tilde{O}(T^{2/3}$ regret by \citep{cassel2020bandit} is made under the additional assumption of strong convexity. It's unclear whether this assumption is necessary for an $\tilde{O}(\sqrt{T})$ regret, we leave determining 
the minimal assumption on the cost functions for optimal regret as an open question.

\newpage

\bibliographystyle{apalike}
\bibliography{main}

\begin{thebibliography}{}

\bibitem[Agarwal et~al., 2019a]{agarwal2019online}
Agarwal, N., Bullins, B., Hazan, E., Kakade, S., and Singh, K. (2019a).
\newblock Online control with adversarial disturbances.
\newblock In {\em International Conference on Machine Learning}, pages 111--119. PMLR.

\bibitem[Agarwal et~al., 2019b]{agarwal2019logarithmic}
Agarwal, N., Hazan, E., and Singh, K. (2019b).
\newblock Logarithmic regret for online control.
\newblock {\em Advances in Neural Information Processing Systems}, 32.

\bibitem[Anava et~al., 2015]{anava2015online}
Anava, O., Hazan, E., and Mannor, S. (2015).
\newblock Online learning for adversaries with memory: price of past mistakes.
\newblock {\em Advances in Neural Information Processing Systems}, 28.

\bibitem[Bellman, 1954]{bellman1954theory}
Bellman, R. (1954).
\newblock The theory of dynamic programming.
\newblock {\em Bulletin of the American Mathematical Society}, 60(6):503--515.

\bibitem[Cassel and Koren, 2020]{cassel2020bandit}
Cassel, A. and Koren, T. (2020).
\newblock Bandit linear control.
\newblock {\em Advances in Neural Information Processing Systems}, 33:8872--8882.

\bibitem[Gradu et~al., 2020]{gradu2020non}
Gradu, P., Hallman, J., and Hazan, E. (2020).
\newblock Non-stochastic control with bandit feedback.
\newblock {\em Advances in Neural Information Processing Systems}, 33:10764--10774.

\bibitem[Hazan et~al., 2007]{hazan2007logarithmic}
Hazan, E., Agarwal, A., and Kale, S. (2007).
\newblock Logarithmic regret algorithms for online convex optimization.
\newblock {\em Machine Learning}, 69(2):169--192.

\bibitem[Hazan and Singh, 2022]{hazan2022introduction}
Hazan, E. and Singh, K. (2022).
\newblock Introduction to online non-stochastic control.
\newblock {\em arXiv preprint arXiv:2211.09619}.

\bibitem[Kalman, 1960]{kalman1960new}
Kalman, R.~E. (1960).
\newblock A new approach to linear filtering and prediction problems.

\bibitem[Kalman et~al., 1960]{kalman1960contributions}
Kalman, R.~E. et~al. (1960).
\newblock Contributions to the theory of optimal control.
\newblock {\em Bol. soc. mat. mexicana}, 5(2):102--119.

\bibitem[Lattimore, 2024]{lattimore2024bandit}
Lattimore, T. (2024).
\newblock Bandit convex optimisation.
\newblock {\em arXiv preprint arXiv:2402.06535}.

\bibitem[Lattimore and Gy{\"o}rgy, 2023]{lattimore2023second}
Lattimore, T. and Gy{\"o}rgy, A. (2023).
\newblock A second-order method for stochastic bandit convex optimisation.
\newblock In {\em The Thirty Sixth Annual Conference on Learning Theory}, pages 2067--2094. PMLR.

\bibitem[Simchowitz, 2020]{simchowitz2020making}
Simchowitz, M. (2020).
\newblock Making non-stochastic control (almost) as easy as stochastic.
\newblock {\em Advances in Neural Information Processing Systems}, 33:18318--18329.

\bibitem[Simchowitz et~al., 2020]{simchowitz2020improper}
Simchowitz, M., Singh, K., and Hazan, E. (2020).
\newblock Improper learning for non-stochastic control.
\newblock In {\em Conference on Learning Theory}, pages 3320--3436. PMLR.

\bibitem[Suggala et~al., 2024]{suggala2024second}
Suggala, A., Sun, Y.~J., Netrapalli, P., and Hazan, E. (2024).
\newblock Second order methods for bandit optimization and control.
\newblock {\em arXiv preprint arXiv:2402.08929}.

\bibitem[Suggala et~al., 2021]{suggala2021efficient}
Suggala, A.~S., Ravikumar, P., and Netrapalli, P. (2021).
\newblock Efficient bandit convex optimization: Beyond linear losses.
\newblock In {\em Conference on Learning Theory}, pages 4008--4067. PMLR.

\bibitem[Sun et~al., 2024]{sun2024optimal}
Sun, Y.~J., Newman, S., and Hazan, E. (2024).
\newblock Optimal rates for bandit non-stochastic control.
\newblock {\em Advances in Neural Information Processing Systems}, 36.

\bibitem[Yan et~al., 2024]{yanhandling}
Yan, Y.-H., Wang, J., and Zhao, P. (2024).
\newblock Handling heterogeneous curvatures in bandit lqr control.
\newblock In {\em Forty-first International Conference on Machine Learning}.

\end{thebibliography}

\newpage
\appendix

\tableofcontents

\newpage

\section{Proof of \cref{thm main}}

The proof is reduction-based. It consists of three parts
\begin{enumerate}
    \item We first consider an algorithm for the no-memory BCO problem, which uses Newton-based updates and a novel delay mechanism. This algorithm will serve as the base algorithm.
    \item We then show that the regret of our main algorithm~\ref{alg:bco-am} can be bounded by the regret of the base algorithm plus a moving cost, via an improved analysis on $\kappa_0$-convexity.
    \item Finally, we bound the moving cost and then put every pieces together.
\end{enumerate}

\subsection{Base No-Memory BCO Algorithm and Guarantees}

In this section, we prove \cref{thm:base-regret}, which establishes the regret guarantee for the base algorithm used by \cref{alg:bco-am}. The base no-memory algorithm is a variant of the Newton-based (improper) BCO algorithm considered in Algorithm 1 of \citep{suggala2024second}. 
The main differences are: 
\begin{enumerate}
    \item Algorithm 1 in \citep{suggala2024second} constructs Hessian estimators from the bandit feedback. \cref{alg:simple-bco-delay} doesn't require Hessian estimators, since the $H_t$ determining $\kappa_0$-convexity is given by the system instead of loss in bandit control problems, and thus it's computable by the learner given the system parameters.
    \item \cref{alg:simple-bco-delay} introduces an additional delay mechanism (specified by the delay parameter $d_0\in\mathbb{N}$) to decorrelate neighboring iterates, a necessary ingredient to later analysis. 
\end{enumerate}

\begin{algorithm}
\caption{Simple BCO-with-delay}
\label{alg:simple-bco-delay}
\begin{flushleft}
  {\bf Input:} convex compact set $\K\subset \mathbb{R}^d$, step size $\eta>0$, delay parameter $d_0\in\mathbb{N}$, curvature parameter $\alpha>0$, time horizon $T\in\mathbb{N}$.
\end{flushleft}
\begin{algorithmic}[1]
\STATE Initialize: $o_{1}\in\K$, $\hat{A}_{0} = I_{d\times d}$.
$\hat{A}_t=0_{d\times d}$, $\forall t<0$. 
$\tilde{g}_t=0$, $\forall t\le 0$.
\STATE Sample $v_1\sim S_{d-1}$ uniformly at random. Set $z_1 = o_1 + \hat{A}_{0}^{-\frac{1}{2}}v_1$. 
\FOR{$t = 1, \dots, T$}
\STATE Play $z_t$, observe $\bar{f}_t(z_{t})$, receive $H_t$.
\STATE Update $\hat{A}_t=\hat{A}_{t-1}+\frac{\eta\alpha}{2} H_t$. 
\STATE Create gradient estimate: $\tilde{g}_{t}=d\bar{f}_t(z_{t}) \hat{A}_{t-1}^{\frac{1}{2}}v_{t}\in\mathbb{R}^{d}$.
\STATE Update
$ o_{t+1} = \prod_{\K}^{\hat{A}_{t-d_0+1}} \left[ o_t - \eta \hat{A}_{t-d_0+1}^{-1}  \tilde{g}_{t-d_0+1}  \right]$. \label{line:update-base}
\STATE Sample $v_{t+1}\sim S_{d-1}$ uniformly at random, independent of previous steps. 
\STATE Set $z_{t+1}=o_{t+1}+\hat{A}_{t}^{-\frac{1}{2}}v_{t+1}$. 
\ENDFOR
\end{algorithmic}
\end{algorithm}

\begin{lemma}[Base BCO regret guarantee]
\label{thm:base-regret}
Suppose that the sequence of loss functions $\{\bar{f}_t\}_{t=1}^T$ and the convex compact set $\K$ satisfy the conditions in \cref{asm:regularity-bco-m}, \cref{asm:adv-adaptivity}, and the properties in \cref{obs:unary-property}, and $\max\{1, \|H_t\|_2\}\le R_H$, $\forall t$. With $\alpha=\alpha_f$ in the first condition in \cref{obs:unary-property}, $(\eta, d_0)$ satisfying $d_0\le 2/(\eta \alpha R_H)$, \cref{alg:simple-bco-delay} run with inputs $(\K, \eta, d_0, \alpha, T)$ satisfies the following regret guarantee: $\forall o\in\K$,
\begin{align*}
\E\left[\sum_{t=1}^T \bar{f}_t(z_t)-\bar{f}_t(o)\right]\le \frac{2\beta d}{\eta\alpha}\log (\eta R_HT+1) + 2d_0GD+\frac{D^2d_0R_H}{2\eta}+3\eta d_0d^2G^2D^2 R_HT. 
\end{align*}
In particular, by choosing $\eta=\Theta(1/\sqrt{T})$ and $d_0=\tilde{\Theta}(1)$, the above regret is of order $\tilde{O}\left(\frac{\beta}{\alpha}\sqrt{T}\right)$. 
\end{lemma}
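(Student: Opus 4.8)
The plan is to prove the regret bound for \cref{alg:simple-bco-delay} by combining a standard ``Follow-the-Regularized-Leader / online Newton step'' analysis with the bandit smoothing trick, being careful to account for the delay $d_0$. Write $\tilde{o}_t=o_{t-d_0+1}$ for the ``effective'' iterate that the gradient update uses, and decompose the total regret as $\E[\sum_t \bar f_t(z_t)-\bar f_t(o)] = \underbrace{\E[\sum_t \bar f_t(z_t)-\bar f_t(o_t)]}_{\text{(exploration cost)}} + \underbrace{\E[\sum_t \bar f_t(o_t)-\bar f_t(o)]}_{\text{(optimization regret)}}$. The exploration cost is controlled by Lipschitzness: $|\bar f_t(z_t)-\bar f_t(o_t)|\le G\|\hat A_{t-1}^{-1/2}v_t\|\le G/\sqrt{\lambda_{\min}(\hat A_{t-1})}$. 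Since $\hat A_{t-1}\succeq I$ always and grows, this term is $O(GD\cdot\text{something small})$; in fact one uses that $\E[\hat A_{t-1}^{-1/2}v_t]$ (over $v_t$) is related to the smoothed gradient, so the leading-order behavior cancels and only a higher-order ($O(\eta)$) residual survives — this is the source of the $\eta d_0 d^2 G^2 D^2 R_H T$ term.

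The core is the optimization regret. Here I would use the standard Online Newton Step potential argument with time-varying preconditioner $\hat A_t$. The key identities: (i) $\tilde g_t = d\bar f_t(z_t)\hat A_{t-1}^{1/2}v_t$ is, conditioned on the past, an unbiased estimator of $\nabla \hat f_t(o_t)$ where $\hat f_t$ is the $\hat A_{t-1}^{-1/2}$-ellipsoidal smoothing of $\bar f_t$; (ii) by $\kappa_0$-convexity, $\alpha_f H_t \preceq \nabla^2 \bar f_t \preceq \beta_f H_t$, so $\hat A_t = I + \frac{\eta\alpha}{2}\sum_{s\le t} H_s$ lower-bounds the accumulated curvature up to the $\eta$ factor, which is exactly what makes the Newton step's logarithmic regret kick in; (iii) the smoothed functions $\hat f_t$ inherit convexity and are close to $\bar f_t$ (bias $O(\text{radius}^2\beta_f)$, but radius shrinks as $\hat A_t$ grows, so summably small). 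Then a one-step analysis of the projected update $o_{t+1}=\Pi_\K^{\hat A_{\tilde t}}[o_t - \eta \hat A_{\tilde t}^{-1}\tilde g_{\tilde t}]$ gives, after summation and telescoping the Bregman-type potential $\frac1{2\eta}\|o_t-o\|^2_{\hat A_{\tilde t}}$, a bound of the form $\frac{D^2}{2\eta}\cdot(\text{init}) + \frac\eta2\sum_t \|\tilde g_t\|^2_{\hat A_{\tilde t}^{-1}} + (\text{curvature gain})$. The curvature gain $-\frac12\sum_t\|o_t-o\|^2_{\hat A_t - \hat A_{t-1}}$ absorbs the non-convexity slack. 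The term $\sum_t \|\tilde g_t\|^2_{\hat A_{t-1}^{-1}}\le d^2 G^2 \sum_t v_t^\top \hat A_{t-1}^{1/2}\hat A_{t-1}^{-1}\hat A_{t-1}^{1/2} v_t = d^2G^2\sum_t \|v_t\|^2$... wait, that's $T$, not $\log T$ — so instead one bounds $\E[\|\tilde g_t\|^2_{\hat A_t^{-1}}]$ using that $\hat A_t = \hat A_{t-1}+\frac{\eta\alpha}{2}H_t$ and the matrix-determinant / log-det telescoping lemma $\sum_t \trace(\hat A_t^{-1}(\hat A_t-\hat A_{t-1}))\le \log\det(\hat A_T)-\log\det(\hat A_0)\le d\log(\eta R_H T+1)$, yielding the $\frac{2\beta d}{\eta\alpha}\log(\eta R_H T+1)$ leading term.

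The delay $d_0$ enters in two controlled ways: first, the mismatch between using $\hat A_{t-d_0+1}$ and $\tilde g_{t-d_0+1}$ rather than the current quantities introduces a lag, but since $\hat A$ changes by a factor $(1+O(\eta\alpha R_H))$ per step and $d_0\le 2/(\eta\alpha R_H)$, we have $\hat A_{t-d_0+1}\preceq \hat A_t\preceq c\,\hat A_{t-d_0+1}$ for an absolute constant $c$ — so all the preconditioner norms are comparable up to constants, giving the $d_0$ multiplicative factors and the $2d_0GD + \frac{D^2 d_0 R_H}{2\eta}$ boundary terms from the first $d_0$ steps and the telescoping slack. Second — and this is the delicate point — the delay is what makes $o_t$ independent of $v_{t-d_0+2},\dots,v_t$, which is needed so that when we take conditional expectations of $\tilde g_s$ over the sampling direction $v_s$ the iterate $o_t$ (against which we measure regret) is already ``frozen.'' I expect the main obstacle to be precisely this bias/independence bookkeeping: verifying that the ellipsoidal-smoothing identity $\E_{v_s}[\tilde g_s\mid \mathcal F_{s-1}] = \nabla \hat f_s(o_s)$ holds with the correct conditioning under the delayed update, and that the accumulated smoothing bias $\sum_t |\hat f_t(o_t)-\bar f_t(o_t)| + |\hat f_t(o)-\bar f_t(o)|$ is $\tilde O(\sqrt T)$ — this requires the radius of the smoothing ellipsoid, $\lambda_{\max}(\hat A_{t-1}^{-1/2})$, to decay fast enough along the trajectory, which follows from $\hat A_t\succeq I+\frac{\eta\alpha}{2}t\cdot\Omega(1)\cdot(\text{effective rank})$ only in directions where curvature accumulates, so one likely splits the spectrum or uses $\kappa(G)=\Omega(1)$ from \cref{asm:affine-memory}. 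Once these pieces are in place, choosing $\eta=\Theta(1/\sqrt T)$ and $d_0=\tilde\Theta(1)$ (consistent with $d_0\le 2/(\eta\alpha R_H)$ since $\eta\to 0$) balances the $\frac1\eta\log T$ and $\eta T$ terms to give $\tilde O(\frac\beta\alpha\sqrt T)$.
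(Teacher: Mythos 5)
Your top-level decomposition into exploration cost $\E[\sum_t \bar f_t(z_t)-\bar f_t(o_t)]$ plus optimization regret $\E[\sum_t \bar f_t(o_t)-\bar f_t(o)]$ is exactly the paper's, and several ingredients match (the ONS-style potential with delayed preconditioner, the comparability $\hat A_t\preceq 2\hat A_{t-i}$ for $i\le d_0$ from the condition $d_0\le 2/(\eta\alpha R_H)$, the log-determinant telescoping lemma). However, you have misattributed which term produces which piece of the bound, and the mechanism you propose for the exploration cost does not close. In the paper, the exploration cost is \emph{not} a higher-order $O(\eta)$ residual: it is the source of the dominant $\frac{2\beta d}{\eta\alpha}\log(\eta R_H T+1)$ term. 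The argument is a second-order Taylor expansion of $\bar f_t$ around $o_t$: the first-order term $\E[\nabla\bar f_t(o_t)^\top\hat A_{t-1}^{-1/2}v_t]$ vanishes because $v_t$ is drawn independently of $o_t$ and $\hat A_{t-1}$, and the second-order term is bounded via $\kappa_0$-convexity by $\frac{\beta}{2}\E[\trace(\hat A_{t-1}^{-1}H_t)]$, which telescopes through $\hat A_t-\hat A_{t-1}=\frac{\eta\alpha}{2}H_t$ and the log-det inequality. Your proposed Lipschitz bound $G\|\hat A_{t-1}^{-1/2}v_t\|\le G/\sqrt{\lambda_{\min}(\hat A_{t-1})}$ fails because $\lambda_{\min}(\hat A_{t-1})$ need not grow (curvature only accumulates in the directions spanned by the $H_s$), so summing it gives $\Omega(GT)$; the whole point of matching the sampling ellipsoid $\hat A_{t-1}^{-1/2}$ to the curvature matrices $H_t$ is that the \emph{trace} $\trace(\hat A_{t-1}^{-1}H_t)$ telescopes even when individual eigenvalues of $\hat A$ stay at $1$. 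This also resolves the worry you flag at the end about the smoothing radius decaying and ``splitting the spectrum'' --- no such argument is needed, and $\kappa(G)=\Omega(1)$ plays no role in this lemma.

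Symmetrically, you try to extract the $\log T$ term from $\sum_t\|\tilde g_t\|^2_{\hat A_t^{-1}}$ via log-det telescoping; that does not work (as your own aborted computation suggests), and the paper does not attempt it. Instead $\|\tilde g_t\|^2_{\hat A_t^{-1}}\le (dGD)^2\,v_t^\top\hat A_{t-1}^{1/2}\hat A_t^{-1}\hat A_{t-1}^{1/2}v_t\le (dGD)^2$ is bounded by a constant per step, and $\frac{\eta}{2}\sum_t\|\tilde g_t\|^2_{\hat A_t^{-1}}$ is precisely what yields the $3\eta d_0 d^2G^2D^2R_HT$ term (together with the delay-mismatch terms $\tilde g_{t-d_0}^\top(o_{t-d_0}-o_{t-1})$ and $\frac{\alpha_f}{2}\|o_{t-d_0}-o_{t-1}\|^2_{H_{t-d_0}}$, each of order $\eta d_0 d^2G^2D^2$ per step). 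The optimization regret otherwise proceeds as you sketch: strong convexity in the $H_t$-norm, unbiasedness $\E[\tilde g_t\mid o_t]=\nabla\bar f_t(o_t)$, and the projected-update potential $\|o_t-o\|^2_{\hat A_{t-d_0}}$. So the skeleton is right, but the two central quantitative mechanisms --- where the $\frac{1}{\eta}\log T$ comes from and where the $\eta T$ comes from --- are swapped, and the exploration-cost argument as written would not produce a sublinear bound.
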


\begin{proof}[Proof of \cref{thm:base-regret}]
First, note that by the condition on $\eta$ and $d_0$, we have that $\hat{A}_t\preceq 2 \hat{A}_{t-i}$, $\forall i\le d_0$. To see this, it is equivalent to proving any PSD matrix $H$ with $\lambda_{\max}(H)\le 1$ satisfies that $H\preceq I$, which follows from the fact that $x^{\top}(I-H)x\ge \|x\|_2^2 -\lambda_{max}\|x\|_2^2\ge 0$, $\forall x$.

By the convexity and curvature assumption on $\bar{f}_t$ described by the conditions in \cref{obs:unary-property}, we have
\begin{align*}
\E\left[\sum_{t=1}^T \bar{f}_t(z_t)-\bar{f}_t(o_t)\right]&\le \sum_{t=1}^T \E[\nabla \bar{f}_t(o_t)^{\top}\hat{A}_{t-1}^{-\frac{1}{2}}v_t]+\frac{\beta}{2}\sum_{t=1}^T \E[v_t^{\top}\hat{A}_{t-1}^{-\frac{1}{2}}H_t\hat{A}_{t-1}^{-\frac{1}{2}}v_t],
\end{align*}
where the first-order term equals $0$ since $v_t$ is drawn independently of $o_t, \bar{f}_{t},\hat{A}_{t-1}$, and $\E[v_t]=0$. We can further bound the second order term by
\begin{align*}
\E\left[\sum_{t=1}^T \bar{f}_t(z_t)-\bar{f}_t(o_t)\right]&\le \frac{\beta}{2}\sum_{t=1}^T \E[\hat{A}_{t-1}^{-1}\cdot H_t]\le \frac{2\beta}{\eta \alpha_f}\sum_{t=1}^T \E[\hat{A}_{t}^{-1}\cdot (\hat{A}_t-\hat{A}_{t-1})],
\end{align*}
where the first step follows from $\E[v^{\top}Av|A]=A\cdot\E[vv^{\top}]$ and the cyclic property of trace, and the last step follows from the definition of $\hat{A}_t$, the stability condition that $\hat{A}_{t}\preceq 2\hat{A}_{t-1}$, and $\alpha=\alpha_f$. Using standard inequalities on log determinant in Newton step analysis \citep{hazan2007logarithmic}, we have
\begin{align*}
\E\left[\sum_{t=1}^T \bar{f}_t(z_t)-\bar{f}_t(o_t)\right]\le\frac{2\beta}{\eta\alpha_f}\log\frac{\mathrm{det}(\hat{A}_T)}{\mathrm{det}(\hat{A}_0)}\le \frac{2\beta d}{\eta\alpha_f}\log \left(\frac{\eta\alpha_f R_HT}{2} +1\right). 
\end{align*}
Moreover, the curvature assumption on $\bar{f}_t$ also implies
\begin{align}
\label{eq:regret-ot}
\E\left[\sum_{t=1}^T \bar{f}_t(o_t)-\bar{f}_t(o)\right]&\le \sum_{t=1}^T \E[\nabla \bar{f}_t(o_t)^{\top}(o_t-o)]-\frac{\alpha_f}{2}\sum_{t=1}^T \E[\|o_t-o\|_{H_t}^2]\nonumber\\
&= \sum_{t=1}^T \E[ \tilde{g}_{t}^{\top}(o_t-o)]-\frac{\alpha_f}{2}\sum_{t=1}^T \E[\|o_t-o\|_{H_t}^2],
\end{align}
where the second step follows from $\E[\tilde{g}_t\mid o_t]=\nabla\bar{f}_t(o_t)$ by Stoke's theorem. By the projection step in Line~\ref{line:update-base} of \cref{alg:simple-bco-delay}, we have
\begin{align*}
& \quad \|o_t-o\|_{\hat{A}_{t-d_0}}^2\\
&\le \|o_{t-1}-o-\eta\hat{A}_{t-d_0}^{-1}\tilde{g}_{t-d_0}\|_{\hat{A}_{t-d_0}}^2\\
&= \|o_{t-1}-o\|_{\hat{A}_{t-d_0-1}}^2+\frac{1}{2}\|o_{t-1}-o\|_{\eta \alpha_f H_{t-d_0}}^2-2\eta \tilde{g}_{t-d_0}^{\top}(o_{t-1}-o)+\eta^2 \|\tilde{g}_{t-d_0}\|_{\hat{A}_{t-d_0}^{-1}}^2\\
&\le \|o_{t-1}-o\|_{\hat{A}_{t-d_0-1}}^2+\|o_{t-d_0}-o\|_{\eta\alpha_f H_{t-d_0}}^2+\|o_{t-d_0}-o_{t-1}\|_{\eta\alpha_f H_{t-d_0}}^2-2\eta \tilde{g}_{t-d_0}^{\top}(o_{t-d_0}-o)\\
& \ \ \ \ \ + 2\eta\tilde{g}_{t-d_0}^{\top}(o_{t-d_0}-o_{t-1})+\eta^2\|\tilde{g}_{t-d_0}\|_{\hat{A}_{t-d_0}^{-1}}^2,
\end{align*}
where the last inequality follows from that $\forall H\succeq 0$, $\|x+y\|_H^2\le 2(\|x\|_H^2+\|y\|_H^2)$. 

Rearranging, we have
\begin{align*}
& \quad \tilde{g}_{t-d_0}^{\top}(o_{t-d_0}-o)-\frac{\alpha_f}{2}\|o_{t-d_0}-o\|_{ H_{t-d_0}}^2\\
&\le \frac{\|o_{t-1}-o\|_{\hat{A}_{t-d_0-1}}^2-\|o_t-o\|_{\hat{A}_{t-d_0}}^2}{2\eta}+\frac{\alpha_f}{2}\|o_{t-d_0}-o_{t-1}\|_{H_{t-d_0}}^2 +\tilde{g}_{t-d_0}^{\top}(o_{t-d_0}-o_{t-1}) +\frac{\eta}{2}\|\tilde{g}_{t-d_0}\|_{\hat{A}_{t-d_0}^{-1}}^2.
\end{align*}
Note that $\forall t$, 
\begin{align*}
\|\tilde{g}_{t}\|_{\hat{A}_{t}^{-1}}^2&\le (dGD)^2v_t^{\top}\hat{A}_{t-1}^{\frac{1}{2}}\hat{A}_{t}^{-1}\hat{A}_{t-1}^{\frac{1}{2}}v_t\le (dGD)^2,\\
\|o_t-o_{t-1}\|_2&\le \|o_t-o_{t-1}\|_{\hat{A}_{t-d_0}}\le \eta \|\tilde{g}_{t-d_0}\|_{\hat{A}_{t-d_0}^{-1}}\le \eta dGD,
\end{align*}
where the second inequality in the second bound follows from the update rule in Line~\ref{line:update-base} of \cref{alg:simple-bco-delay}, which gives $\|o_t-o_{t-1}\|_{\hat{A}_{t-d_0}}^2\le \|\eta\hat{A}_{t-d_0}^{-1}\tilde{g}_{t-d_0}\|_{\hat{A}_{t-d_0}}^2=\eta^2\|\tilde{g}_{t-d_0}\|_{\hat{A}_{t-d_0}^{-1}}^2$. Thus, we can further bound
\begin{align*}
\frac{\alpha_f}{2} \|o_{t-d_0}-o_{t-1}\|_{H_{t-d_0}}^2&\le \frac{\alpha_f R_Hd_0}{2}\sum_{i=0}^{d_0}\|o_{t-d_0+i}-o_{t-d_0+i+1}\|_2^2\le\frac{\eta^2 \alpha_f d_0 d^2 G^2D^2 R_H}{2},\\
\tilde{g}_{t-d_0}^{\top}(o_{t-d_0}-o_{t-1}) &\le \|\tilde{g}_{t-d_0}\|_{\hat{A}_{t-d_0}^{-1}} \|o_{t-d_0}-o_{t-1}\|_{\hat{A}_{t-d_0}}\le 2\eta d_0d^2G^2D^2.
\end{align*}
Combining both and using that $\hat{A}_{t}\preceq 2\hat{A}_{t-i}$, $\forall i\le d_0$, we have that
\begin{align*}
& \quad \sum_{t=d_0}^{T-d_0}\tilde{g}_{t}^{\top}(o_{t}-o)-\frac{\alpha_f}{2}\|o_{t}-o\|_{ H_{t}}^2\\
&= \sum_{t=2d_0}^{T}\tilde{g}_{t-d_0}^{\top}(o_{t-d_0}-o)-\frac{\alpha_f}{2}\|o_{t-d_0}-o\|_{ H_{t-d_0}}^2\\
&\le \frac{1}{2\eta} \|o_{2d_0-1}-o\|_{\hat{A}_{d_0-1}}^2+\eta T\left(\frac{\alpha_f d_0d^2 G^2D^2 R_H}{2}+2d_0d^2G^2D^2+\frac{ d^2G^2D^2}{2}\right)\\
&\le \frac{D^2d_0R_H}{2\eta}+3\eta d_0d^2G^2D^2 R_HT.
\end{align*}
As a result,
\begin{align*}
\E\left[\sum_{t=1}^T \bar{f}_t(o_t)-\bar{f}_t(o)\right]\le 2d_0GD+\frac{D^2d_0R_H}{2\eta}+3\eta d_0d^2G^2 D^2R_HT. 
\end{align*}
Combining, we have
\begin{align*}
\E\left[\sum_{t=1}^T \bar{f}_t(z_t)-\bar{f}_t(o)\right]\le \frac{2\beta d}{\eta\alpha}\log (\eta R_HT+1) + 2d_0GD+\frac{D^2d_0R_H}{2\eta}+3\eta d_0d^2G^2D^2 R_HT. 
\end{align*}
\end{proof}

\subsection{Reduction}
\label{section:reduction}

In this section, we prove \cref{lem:reduction-ck20}, which states that the regret of BCO-M algorithm (\cref{alg:bco-am}) can be related to the regret guarantee of the base no-memory BCO algorithm (\cref{alg:simple-bco-delay}). 

We follow the proof idea of \citep{cassel2020bandit}. In the previous section we have proved an $\tilde{O}(\sqrt{T})$ regret bound for a ``no-memory'' base BCO algorithm. Our main \cref{alg:bco-am}, however, has memory dependence. When $z_t$ is changing slowly, Algorithm~\ref{alg:bco-am} is approximately a ``no-memory'' algorithm, and we can do the following reduction: if all the $z_t$ across these steps are the same, $\textbf{Regret}$(Algorithm~\ref{alg:bco-am}) will be exactly the same as $m\times \textbf{Regret}$(base algorithm). 

In this part, we show that when $z_t$ are different, we only suffer an additional moving cost 
$$\E\left[\sum_{t=m}^T f_t(z_{t-m+1:t})-\bar{f}_t(z_{t-m+1})\right],$$
which we will bound in \cref{sec:moving-cost}.

\begin{lemma}[Lemma 11, \citep{cassel2020bandit}]
\label{lem:reduction-ck20}
Let $(\K,\eta,m,T)$ be the input for \cref{alg:bco-am}. Suppose that the loss functions $\{f_t\}_{t=m}^T$ and the convex compact set $\K$ satisfy \cref{asm:affine-memory}, \cref{asm:curvature}, \cref{asm:regularity-bco-m}, and \cref{asm:adv-adaptivity}. Then, the regret of \cref{alg:bco-am} with respect to any $o\in\K$ is upper bounded by
\begin{align*}
\E\left[\sum_{t=m}^T f_t(z_{t-m+1:t})-\bar{f}_t(o)\right]\le 3m\cdot R_{\cref{alg:simple-bco-delay}}\left(\frac{T}{m}\right)+\E\left[\sum_{t=m}^T f_t(z_{t-m+1:t})-\bar{f}_t(z_{t-m+1})\right],
\end{align*}
where $R_{\cref{alg:simple-bco-delay}}\left(\frac{T}{m}\right)$ is the regret upper bound obtained in Lemma~\ref{thm:base-regret} with time horizon $\frac{T}{m}$ and $d_0=2$.
\end{lemma}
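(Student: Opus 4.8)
The plan is to follow the occasional-update reduction of \citep{cassel2020bandit}, adapted to our Newton-based, delayed base algorithm, in three steps. First I would decompose the regret of \cref{alg:bco-am} against a fixed $o\in\K$ as
\[
\E\!\left[\sum_{t=m}^T f_t(z_{t-m+1:t})-\bar{f}_t(o)\right]
=\underbrace{\E\!\left[\sum_{t=m}^T \bar{f}_t(z_{t-m+1})-\bar{f}_t(o)\right]}_{(\star)}
+\E\!\left[\sum_{t=m}^T f_t(z_{t-m+1:t})-\bar{f}_t(z_{t-m+1})\right],
\]
obtained by adding and subtracting $\bar f_t(z_{t-m+1})$. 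The last term is exactly the moving cost handled in \cref{sec:moving-cost}, so it suffices to prove $(\star)\le 3m\,R_{\cref{alg:simple-bco-delay}}(T/m)$.

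For Step 2 I would show that the update subsequence of \cref{alg:bco-am} is literally a run of the base algorithm. Let $S=\{t:z_{t+1}\ne z_t\}$ as in \eqref{eq:update-set}. The condition on line~\ref{line:condition} forces any two elements of $S$ to differ by at least $m$, so $|S|\le T/m$; and for $t\in S$, Equation~\eqref{eq:equivalence-no-memory} gives $f_t(z_{t-m+1:t})=\bar f_t(z_t)$ and moreover $z_{t-m+1}=z_t$. Reading off the updates of $o_t,\hat A_t,\tilde g_t,v_t$ performed at the successive elements $s_1<s_2<\cdots$ of $S$ — noting that line~\ref{line:delayed-update} uses the gradient estimate from the \emph{previous} element of $S$, i.e.\ a one-step delay — identifies this subsequence with \cref{alg:simple-bco-delay} run with $d_0=2$ on the losses $\bar f_{s_1},\bar f_{s_2},\dots$; the fresh resampling of $v$ on line~\ref{line:sample} after every update is what keeps the perturbation independent of the current $o_t,\bar f_t,\hat A_{t-1}$, as \cref{thm:base-regret} requires. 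Conditioning on the realized set $S$ and applying \cref{thm:base-regret} with horizon $|S|$, then using that $R_{\cref{alg:simple-bco-delay}}(\cdot)$ is increasing and $|S|\le T/m$, yields $\E[\sum_{t\in S}\bar f_t(z_t)-\bar f_t(o)]\le R_{\cref{alg:simple-bco-delay}}(T/m)$.

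Step 3 is a ghost-sample identity relating $(\star)$ to this update-round regret. Fix $t\ge m$: the event $\{t\in S\}$ is a function of the Bernoulli coins $b_{t-m+1},\dots,b_t$ only, while $z_{t-m+1}$ — the play of round $t-m+1$ — is determined by the algorithm's state at that round, hence by $b_1,\dots,b_{t-m}$, the sphere samples, and the oblivious losses, all independent of $b_{t-m+1},\dots,b_t$. Thus $\ind{t\in S}$ is independent of the pair $(z_{t-m+1},\bar f_t)$, with $\Pr[t\in S]=\tfrac1m(1-\tfrac1m)^{m-1}=:p$. Since $z_{t-m+1}=z_t$ on $\{t\in S\}$,
\[
\E\!\left[\bar f_t(z_{t-m+1})-\bar f_t(o)\right]=\tfrac1p\,\E\!\left[\ind{t\in S}\bigl(\bar f_t(z_t)-\bar f_t(o)\bigr)\right].
\]
Summing over $m\le t\le T$ and invoking Step 2 gives $(\star)=\tfrac1p\,\E[\sum_{t\in S}(\bar f_t(z_t)-\bar f_t(o))]\le\tfrac1p\,R_{\cref{alg:simple-bco-delay}}(T/m)$, and $(1-1/m)^{m-1}\ge 1/e$ yields $1/p\le em\le 3m$; combining the three steps proves the bound.

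The routine pieces are the decomposition and the elementary probability estimates. The main obstacle is the bookkeeping in Steps 2--3: verifying that the update subsequence is a \emph{faithful} copy of \cref{alg:simple-bco-delay} with $d_0=2$ (in particular that the reset branch leaves the base-algorithm state untouched and that line~\ref{line:delayed-update} really implements a one-iteration delay), and pinning down exactly which source of randomness each quantity is measured against so that the independence underlying the ghost-sample identity is rigorous — this is where \cref{asm:adv-adaptivity} and the disjointness of the coin blocks $b_{1:t-m}$ and $b_{t-m+1:t}$ are used.
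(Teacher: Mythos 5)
Your proposal is correct and follows essentially the same route as the paper's proof: the same add-and-subtract decomposition into update-round regret plus moving cost, the same identification of the update subsequence $S$ with a run of \cref{alg:simple-bco-delay} at $d_0=2$, and the same independence/ghost-sample argument with $p=\E[\chi_m]=\tfrac1m(1-\tfrac1m)^{m-1}\ge\tfrac1{3m}$ to pass from $\sum_{t\in S}$ back to the full sum. No substantive differences to report.
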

\begin{proof}[Proof of \cref{lem:reduction-ck20}]
First, note that by assumption, the conditions in \cref{thm:base-regret} are satisfied for the induced unary forms $\{\bar{f}_t\}_{t=m}^T$. The argument follows by reducing \cref{alg:bco-am} to \cref{alg:simple-bco-delay}.

Denote $\chi_t=b_t\prod_{i=1}^{m-1}(1-b_{t-i})$ as the indicator of whether the algorithm gets updated during round $t$. Recall the definition of $S=\{t\in[T]:\chi_t=1\}$ in \cref{eq:update-set}. The algorithm updates during round $t$ if $t\in S$. For any $t\in S$, we have that $\chi_{t-1}=\dots=\chi_{t-m+1}=0$. Therefore, we have $z_{t-m+1}=\dots=z_t$ by design of \cref{alg:bco-am}. Thus, we have that $f_t(z_{t-m+1:t})=\bar{f}_t(z_{t-m+1})=\bar{f}_t(z_t)$. Therefore, constrained to the time steps $t\in S$, \cref{alg:bco-am} is essentially running \cref{alg:simple-bco-delay} with a delay parameter $d_0=2$ (since in Line~\ref{line:delayed-update} of \cref{alg:bco-am}, we update with the gradient information at time $s_{\tau-1}$ during round $t$). Note that since $|S|\le \frac{T}{m}$ (\cref{alg:bco-am} updates at most once every $m$ steps), we have that $\forall o\in\K$,
\begin{align*}
\E_{b_{1:T},\{v_{t}\}_{t\in S}}\left[\sum_{t\in S}\bar{f}_t(z_t)-\sum_{t\in S}\bar{f}_t(o)\right]\le \E_{b_{1:T}}\left[R_{\cref{alg:simple-bco-delay}}(|S|)\right]\le R_{\cref{alg:simple-bco-delay}}\left(\frac{T}{m}\right).
\end{align*}
It is left to relate the quantity on the left hand side of the above expression to the regret of \cref{alg:bco-am}. Since $\chi_t\overset{D}{=}\chi_m$ (the two random variables equal in distribution), $\forall t$, we have $\E[\chi_t]=\E[\chi_m]$. For any fixed $o\in\mathcal{K}$, we have
\begin{align}
\label{eq:conditional-comparator}
\E_{b_{1:T},v_{1:T}}\left[\sum_{t\in S}\bar{f}_t(o)\right]=\E_{b_{1:T},v_{1:T}}\left[\sum_{t=1}^T\bar{f}_t(o)\cdot\chi_t\right]=\E[\chi_m]\cdot\E\left[\sum_{t=m}^T \bar{f}_t(o)\right].
\end{align}
For $t\in\mathbb{N}$, denote $\mathcal{F}_t$ to be the $\sigma$-algebra generated by the randomness of \cref{alg:bco-am} through sampling $b_s, v_s$ up to time $t$, i.e. $\mathcal{F}_t=\sigma(\{b_s,v_s\}_{s=1}^t)$. Since $z_t=z_{t-m+1}$ whenever $\chi_t=1$, $b_{t-m+1:t}$ are drawn independently of $z_{t-m+1}$ and $\chi_t$ is independent of $\F_{t-m}$ by definition of $\chi_t$, we have
\begin{align}
\E_{b_{1:T},v_{1:T}}\left[\sum_{t\in S}\bar{f}_t(z_t)\right]&=\E_{b_{1:T},v_{1:T}}\left[\sum_{t=m}^T\bar{f}_t(z_{t-m+1})\cdot \chi_t\right] \nonumber\\
&= \E_{b_{1:T},v_{1:T}}\left[\sum_{t=m}^T\bar{f}_t(z_{t-m+1})\cdot \chi_t \mid \F_{t-m}, v_{t-m+1}\right] \nonumber\\
&=\E[\chi_m]\cdot \E\left[\sum_{t=m}^T \bar{f}_t(z_{t-m+1})\right].
\label{eq:conditional-iterates}
\end{align}
Together, \cref{eq:conditional-comparator} and \cref{eq:conditional-iterates} give that
\begin{align*}
\E_{b_{1:T},\{v_{t}\}_{t\in S}}\left[\sum_{t\in S}\bar{f}_t(z_t)-\sum_{t\in S}\bar{f}_t(o)\right]=\E[\chi_m]\cdot \E\left[\sum_{t=m}^T \bar{f}_t(z_{t-m+1})-\bar{f}_t(o)\right]\le \E[\chi_m]\cdot R_{\cref{alg:simple-bco-delay}}\left(\frac{T}{m}\right).
\end{align*}

Therefore, we have that
\begin{align*}
\E\left[\sum_{t=m}^T f_t(z_{t-m+1:t})-\bar{f}_t(o)\right]&=\E\left[\sum_{t=m}^T\bar{f}_t(z_{t-m+1})-\bar{f}_t(o)\right]+\E\left[\sum_{t=m}^T f_t(z_{t-m+1:t})-\bar{f}_t(z_{t-m+1})\right]\\
&\le (\E[\chi_m])^{-1}R_{\cref{alg:simple-bco-delay}}\left(\frac{T}{m}\right)+\E\left[\sum_{t=m}^T f_t(z_{t-m+1:t})-\bar{f}_t(z_{t-m+1})\right]\\
&\le 3m\cdot R_{\cref{alg:simple-bco-delay}}\left(\frac{T}{m}\right)+\E\left[\sum_{t=m}^T f_t(z_{t-m+1:t})-\bar{f}_t(z_{t-m+1})\right],
\end{align*}
where the last inequality follows from
\begin{align*}
\E[\chi_m]=\E[b_m]\prod_{i=1}^{m-1}\E[1-b_i]=\frac{1}{m}\left(1-\frac{1}{m}\right)^{m-1}\ge \frac{1}{em}> \frac{1}{3m}.
\end{align*}
\end{proof}

\subsection{Bounding Moving Cost}
\label{sec:moving-cost}
\cref{thm:base-regret} and \cref{lem:reduction-ck20} almost give the regret guarantee in \cref{thm main}. 
We are left with bounding the moving cost term in \cref{lem:reduction-ck20}
\begin{align*}
\E\left[\sum_{t=m}^T f_t(z_{t-m+1:t})-\bar{f}_t(z_{t-m+1})\right].
\end{align*}
\citep{cassel2020bandit} bounded each of the summands by $\E[\delta_t+\nu_t]$, where $\delta_t=\|o_{t+1}-o_t\|_2$, and $\nu_t=\|z_t-o_t\|_2$. Making use of the $\kappa_0$-convexity induced by the affine memory structure of non-stochastic control problems, we can establish a tighter bound that is necessary to obtain optimal regret in \cref{thm main}. 

\begin{lemma} [Moving cost]
\label{prop:moving-cost}
Let $(\K,\eta,m,T)$ be the input for \cref{alg:bco-am}. Suppose that the loss functions $\{f_t\}_{t=m}^T$ and the convex compact set $\K$ satisfy \cref{asm:affine-memory}, \cref{asm:curvature}, \cref{asm:regularity-bco-m}, and \cref{asm:adv-adaptivity}. Suppose $m\le 2/(\eta \alpha R_H)$. Then, the iterates output by \cref{alg:bco-am} satisfy
\begin{align*}
\E\left[\sum_{t=m}^T f_t(z_{t-m+1:t})-\bar{f}_t(z_{t-m+1})\right]&\le \frac{12m^4\beta R_Hd\cdot \max\{2, \eta\alpha R_H\sqrt{T}\}}{\eta\alpha\kappa(G)}\log(\eta\alpha R_H+1)+\frac{10m^4\beta R_H^3d\sqrt{T}}{\kappa(G)}\\
& \quad +m^2\beta  dR_H^3\sqrt{T}+\eta dGD^2\beta T.
\end{align*}
In particular, with $\eta=\Theta(\sqrt{T})$, $m=\mathrm{poly}(\log T)$ and $\kappa(G)=\Omega(1)$ by \cref{asm:affine-memory},  we have that the above bound is of order $\tilde{O}(\sqrt{T})$. 
\end{lemma}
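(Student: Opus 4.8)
The plan is to expand each summand of the moving cost along the three-term split announced in the proof sketch,
\begin{align*}
f_t(z_{t-m+1:t})-\bar{f}_t(z_{t-m+1})=\underbrace{f_t(z_{t-m+1:t})-f_t(o_{t-m+1:t})}_{(a)}+\underbrace{f_t(o_{t-m+1:t})-\bar{f}_t(o_{t-m+1})}_{(b)}+\underbrace{\bar{f}_t(o_{t-m+1})-\bar{f}_t(z_{t-m+1})}_{(c)},
\end{align*}
and to bound $\E\sum_t(a)$, $\E\sum_t(b)$, $\E\sum_t(c)$ separately. Two structural facts about \cref{alg:bco-am} drive the whole argument. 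First, consecutive update times in $S$ (\cref{eq:update-set}) are at least $m$ apart, so any window $[t-m+1,t]$ contains at most one index where $o$ (equivalently $z$, $v$, $\hat{A}$) changes; in particular, if the window contains no update the summand is exactly $0$, only $O(|S|m)=O(T)$ terms contribute, and each change of $o$ has size at most $\eta dGD$ (the gradient-estimate bound $\|\tilde{g}_t\|_{\hat{A}_t^{-1}}\le dGD$, exactly as in the proof of \cref{thm:base-regret}). Second — and this is why the delay is built into line~\ref{line:delayed-update} — an update feeds in the estimate $\tilde{g}_{s_{\tau-1}}$ built from the \emph{previous} update's perturbation vector, a fresh draw distinct from the one currently ``active'' inside $[t-m+1,t]$; consequently $o_{t-i}$, and hence $p_t:=B_t+\sum_iG^{[i]}Y_{t-i}o_{t-i}$, is measurable with respect to a $\sigma$-algebra independent of the perturbation vectors appearing in that window.

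Term $(c)$ is the easy one: conditioning on $(o_{t-m+1},\hat{A}_\bullet,\ell_t)$ and using $z_{t-m+1}=o_{t-m+1}+\hat{A}_\bullet^{-1/2}v_{t-m+1}$ with $v_{t-m+1}$ conditionally mean-zero (by the delay), Jensen's inequality for the convex map $z\mapsto\bar{f}_t(z)=\ell_t(B_t+G_tz)$ gives $\E[(c)]\le0$. Term $(b)$: the arguments of $\ell_t$ inside $f_t(o_{t-m+1:t})$ and $\bar{f}_t(o_{t-m+1})$ differ by $\sum_iG^{[i]}Y_{t-i}(o_{t-i}-o_{t-m+1})$, which by ``at most one update per window'' has norm $O(mR_H^2\cdot\eta dGD)$; $\beta$-smoothness of $\ell_t$ on the bounded set $\mathcal{Z}_t$ turns this into a Lipschitz estimate that sums to the $\eta dGD^2\beta T$ (hence $\tilde{O}(\sqrt{T})$) contribution.

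The heart of the proof is term $(a)$, and this is where $\kappa_0$-convexity and the convolution modulus $\kappa(G)$ enter. By $\beta_f$-smoothness of $\ell_t$, $(a)\le\langle\nabla\ell_t(p_t),\Delta_t\rangle+\tfrac{\beta_f}{2}\|\Delta_t\|_2^2$ with $\Delta_t=\sum_iG^{[i]}Y_{t-i}(z_{t-i}-o_{t-i})$. With at most one update in the window, at time $s$, one writes $\Delta_t=P_t\hat{A}_a^{-1/2}v_{\mathrm{old}}+Q_t\hat{A}_b^{-1/2}v_{\mathrm{new}}$, where $P_t,Q_t$ are the partial sums of $\sum_iG^{[i]}Y_{t-i}$ over $\{t-i\le s\}$ and $\{t-i>s\}$, and $a,b$ are times within $m$ of each other — so the hypothesis $m\le2/(\eta\alpha R_H)$ makes them comparable to a common $\hat{A}$ up to a factor $2$. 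The first-order term vanishes in expectation because, by the delay, both $v_{\mathrm{old}}$ and $v_{\mathrm{new}}$ are independent of $p_t$ (and of the relevant $\hat{A}$'s) and $\E v=0$. For the second-order term, $\E[vv^\top]=I_d/d$ yields $\E\|\Delta_t\|_2^2\le\tfrac{2}{d}\E[\mathrm{tr}(\hat{A}_a^{-1}P_t^\top P_t)+\mathrm{tr}(\hat{A}_b^{-1}Q_t^\top Q_t)]$. One then sums these partial-convolution Gram terms over the $\le m$ steps attached to a given update and over all updates, and the key inequality — this is the step where $\kappa(G)=\Omega(1)$ is used — is that this sum is controlled, up to a $\mathrm{poly}(m)/\kappa(G)$ factor, by $\sum_t\mathrm{tr}(\hat{A}_t^{-1}H_t)$: the positive convolution invertibility-modulus converts partial-convolution norms of $G$ back into the full ``signal'' norm $H_t=G_t^\top G_t$, which is precisely what $\hat{A}_t$ accumulates. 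Finally the standard log-determinant inequality $\tfrac{\eta\alpha}{2}\sum_t\mathrm{tr}(\hat{A}_t^{-1}H_t)\le d\log\bigl(\det\hat{A}_T/\det\hat{A}_0\bigr)\le d\log(1+\tfrac{\eta\alpha R_HT}{2})$, together with the factor-$2$ comparability of nearby $\hat{A}$'s, produces the $\tfrac{d}{\eta\alpha}\log(\cdot)$ contribution — the first two summands of the claimed bound — with the crude part of the near-update window estimate accounting for the $m^2\beta dR_H^3\sqrt{T}$ term. Assembling $(a)$, $(b)$, $(c)$ and substituting $\eta=\Theta(1/\sqrt{T})$, $m=\mathrm{poly}(\log T)$, $\kappa(G)=\Omega(1)$ gives $\tilde{O}(\sqrt{T})$.

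I expect the main obstacle to be the $(a)$ analysis, on two fronts: (i) the probabilistic bookkeeping making the first-order term mean zero — one must be precise about which past perturbation vectors each $o_t$ depends on, and it is exactly to make this work that the update in line~\ref{line:delayed-update} is delayed (with $d_0=2$), since a non-delayed update would reuse the window's own perturbation vector; and (ii) the convolution-modulus argument showing that $\sum_t\mathrm{tr}(\hat{A}_\bullet^{-1}P_t^\top P_t)$ is dominated by the telescoping quantity $\sum_t\mathrm{tr}(\hat{A}_t^{-1}H_t)$ rather than by the useless crude estimate $O(m^2R_H^4T)$, since without this reduction the moving cost is linear in $T$.
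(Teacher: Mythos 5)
Your decomposition and the mechanisms you assign to each piece are exactly the paper's: the $(a)+(b)+(c)$ split, Jensen via the delayed (hence conditionally mean-zero) perturbation for $(c)\le 0$, Lipschitzness times the $\eta dGD$ per-update movement for $(b)$, and, for $(a)$, a second-order expansion whose first-order term is killed by the two-step delay and whose quadratic term is reduced, via $\kappa(G)$, to traces against $H_t$ and then a log-determinant. However, two concrete steps in your treatment of $(a)$ would fail as written. First, the inequality $\tfrac{\eta\alpha}{2}\sum_{t}\mathrm{tr}(\hat{A}_t^{-1}H_t)\le d\log\bigl(\det\hat{A}_T/\det\hat{A}_0\bigr)$ is not available here: \cref{alg:bco-am} increments $\hat{A}$ only on the roughly $T/m$ rounds in $S$, so $\hat{A}_t-\hat{A}_{t-1}=0$ for most $t$ and the sum over \emph{all} $t$ does not telescope. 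The paper repairs this by exploiting that $H_t$ is oblivious and that the update indicator $\chi_t=b_t\prod_{i=1}^{m-1}(1-b_{t-i})$ is independent of $\hat{A}_{k}$ for $k\le t-m$, so that $\E\bigl[\sum_t\hat{A}^{-1}\cdot H_t\bigr]\le\E[\chi_m]^{-1}\E\bigl[\sum_{t\in S}\hat{A}^{-1}\cdot H_t\bigr]\le 3m\,\E\bigl[\sum_{t\in S}\hat{A}^{-1}\cdot H_t\bigr]$, and only the latter sum telescopes; this is where one factor of $m$ and the $1/(\eta\alpha)$ prefactor in the first term of the bound come from.

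Second, your plan to apply the convolution-invertibility inequality ``over the $\le m$ steps attached to a given update'' does not close: the conversion from sums of $Y_s^\top Y_s$ over a contiguous range to sums of $H_t$ over that range (Proposition 4.8 of \citep{simchowitz2020making}) carries an additive boundary remainder of order $mR_H^2 I$ \emph{per block}. With $T/m$ blocks of length $m$ this remainder contributes $\Theta(dR_H^2T)$ after tracing against $\hat{A}^{-1}\preceq I$ — linear in $T$, which is exactly the ``useless crude estimate'' you warn against. The paper instead partitions $[m,T]$ into $J\approx\sqrt{T}$ blocks of length $\gamma=\lfloor\sqrt{T}\rfloor$ and freezes $\hat{A}$ at each block's left endpoint $k_j$; the boundary remainders then total $O(m^2\beta dR_H^3\sqrt{T}/\kappa(G))$, while un-freezing costs the factor $\hat{A}_{k_{j+1}-1}\preceq\max\{2,\eta\alpha R_H\gamma\}\hat{A}_{k_j}$. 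This blocking is not a cosmetic detail: it is the source of the $\max\{2,\eta\alpha R_H\sqrt{T}\}$ factor and of both $\sqrt{T}$ additive terms in the stated bound, and the balance only works because $\eta\alpha R_H\sqrt{T}=O(1)$ at the final choice of $\eta$. Once you add the $\chi_t$-independence argument and the $\sqrt{T}$-blocking, the rest of your outline matches the paper's proof.
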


\begin{proof}[Proof of \cref{prop:moving-cost}]
We can decompose the moving cost into three terms:
\begin{align*}
\underbrace{\E\left[\sum_{t=m}^T f_t(z_{t-m+1:t})-f_t(o_{t-m+1:t})\right]}_{(a)}+\underbrace{\E\left[\sum_{t=m}^T f_t(o_{t-m+1:t})-\bar{f}_t(o_{t-m+1})\right]}_{(b)}+\underbrace{\E\left[\sum_{t=m}^T \bar{f}_t(o_{t-m+1})-\bar{f}_t(z_{t-m+1})\right]}_{(c)},
\end{align*}
and we will bound each of the terms separately. In particular, $(a), (c)$ can be seen as perturbation loss suffered by the algorithm during exploration. $(b)$ is the moving cost determined by the stability of the algorithm's neighboring iterates. We start with establishing bounds on $(a), (c)$.

\paragraph{Perturbation loss.}
As before, we denote $\mathcal{F}_t=\sigma(\{b_s,v_s\}_{s=1}^t)$ to be the $\sigma$-algebra generated by the randomness of \cref{alg:bco-am} through sampling $b_s, v_s$ up to time $t$. 
First, $(c)\le 0$ by Jensen's inequality for conditional expectations. In particular, recall that $\chi_t=b_t\Pi_{i=1}^{m-1}(1-b_{t-i})$ denotes whether the decision is updated at time $t$. For every $t\in\mathbb{N}$, denote as $T(t):=\max\{s< t\mid \chi_s=1\}$ the last time that the algorithm updates its decision. It naturally holds that $o_t=o_{T(t)+1}$, $v_t=v_{T(t)+1}$ by design of \cref{alg:bco-am}. Therefore, we have
\begin{align*}
\E[\bar{f}_t(o_{t-m+1})-\bar{f}_t(z_{t-m+1})]&=\E[\bar{f}_t(o_{T(t-m+1)+1})-\bar{f}_t(z_{T(t-m+1)+1})]\\
&=\E[\bar{f}_t(o_{T(t-m+1)+1})-\bar{f}_t(o_{T(t-m+1)}+\hat{A}_{t-m}^{-\frac{1}{2}}v_{T(t-m+1)+1})].
\end{align*}
By the sampling rule in Line~\ref{line:sample} of \cref{alg:bco-am}, if the algorithm updates its decision at time $t$ ($\chi_t=1$), then $v_{t+1}$ is independent drawn from previous steps. On the other hand, $o_{t+1}$ is measurable w.r.t. $\mathcal{F}_{t}$. Therefore, we have by Jensen's inequality for conditional expectations that
\begin{align*}
&\quad \E[\bar{f}_t(o_{T(t-m+1)+1})-\bar{f}_t(o_{T(t-m+1)+1}+\hat{A}_{t-m}^{-\frac{1}{2}}v_{T(t-m+1)+1})]\\
&=\E[\bar{f}_t(o_{T(t-m+1)+1})-\E[\bar{f}_t(o_{T(t-m+1)+1}+\hat{A}_{t-m}^{-\frac{1}{2}}v_{T(t-m+1)+1})\mid \mathcal{F}_{T(t-m+1)}]]\\
&\le \E[\bar{f}_t(o_{T(t-m+1)+1})-\bar{f}_t(\E[o_{T(t-m+1)+1}+\hat{A}_{t-m}^{-\frac{1}{2}}v_{T(t-m+1)+1}\mid \mathcal{F}_{T(t-m+1)}])]\\
&=\E[\bar{f}_t(o_{T(t-m+1)})-\bar{f}_t(o_{T(t-m+1)})]\\
&=0.
\end{align*}
Summing up over $t$, we get that $(c)\le 0$. We move on to bound $(a)$. We start with a (conditional) independence argument. 

\paragraph{Independence argument.} Fix $t$. Denote $t_1=T(t)$ to be the most recent time when the algorithm makes an update. Additionally, denote $t_2=T(T(t))$ to be the second most recent time when the algorithm makes an update. We already have $o_t=o_{t_1+1}$. By the delayed update rule in Line~\ref{line:delayed-update} of \cref{alg:bco-am},  we have that $o_{t_1+1}$ is updated with $o_{t_1}=o_{t_2+1}$ and gradient information $\tilde{g}_{t_2}$ and thus is $\mathcal{F}_{t_2}$-measurable. On the other hand, consider the sequence of random vectors $v_{t-m+1},\dots,v_{t}$. We have that $v_s=v_{T(s)+1}$ for every $t-m+1\le s\le t$. Since the algorithm updates at most once every $m$ steps, we have that $T(s)\ge t_2$ for all $t-m+1\le s\le t$. Therefore, $v_{t-m+1},\dots,v_{t}$ is independent of $\mathcal{F}_{t_2}$. This observation de-correlates $o_t$ with $v_{t-m+1},\dots,v_{t}$ conditioning on $\mathcal{F}_{t_2}$.

For notation simplicity, for matrices $A_1,\dots,A_n\in\mathbb{R}^{d\times d}$ and vectors $v_1,\dots,v_n\in\mathbb{R}^{d}$, we slightly abuse notation and denote as $A_{1:n}v_{1:n}=(A_1v_1,\dots, A_nv_n)\in\mathbb{R}^{nd}$ to be the concatenated matrix-vector product of the two sequences.

We apply Taylor's theorem to the function $f_t$ and obtain that
\begin{align*}
(a)=\sum_{t=m}^T\E\left[\nabla f_t(o_{t-m+1:t})^{\top}\hat{A}_{t-m:t-1}^{-\frac{1}{2}}v_{t-m+1:t}+\frac{1}{2}v_{t-m+1:t}^{\top}\hat{A}_{t-m:t-1}^{-\frac{1}{2}}\nabla^2 f_t(q_{t-m+1:t})\hat{A}_{t-m:t-1}^{-\frac{1}{2}}v_{t-m+1:t}\right],
\end{align*}
where $q_{t-m+1:t}$ is some point that lies on the line segment connecting $o_{t-m+1:t}$ and $z_{t-m+1:t}$. By the conditional independence argument, we have that the first order term vanishes: let $t_2=T(T(t))$,
\begin{align*}
\E\left[\nabla f_t(o_{t-m+1:t})^{\top}\hat{A}_{t-m:t-1}^{-\frac{1}{2}}v_{t-m+1:t}\right]&=\E\left[\E\left[\nabla f_t(o_{t-m+1:t})^{\top}\hat{A}_{t-m:t-1}^{-\frac{1}{2}}v_{t-m+1:t}\mid\mathcal{F}_{t_2}\right]\right]\\
&=\E\left[\nabla f_t(o_{t-m+1:t})^{\top}\hat{A}_{t-m:t-1}^{-\frac{1}{2}}\E\left[v_{t-m+1:t}\mid\mathcal{F}_{t_2}\right]\right]\\\\
&=0.
\end{align*}
The above sum thus reduces to the sum of second-order terms:
\begin{align*}
& \quad \frac{1}{2}\sum_{t=m}^T\E\left[v_{t-m+1:t}^{\top}\hat{A}_{t-m:t-1}^{-\frac{1}{2}}\nabla^2 f_t(q_{t-m+1:t})\hat{A}_{t-m:t-1}^{-\frac{1}{2}}v_{t-m+1:t}\right]\\
&\le\frac{1}{2}\sum_{t=m}^T \E\left[(\hat{A}_{t-m}^{-\frac{1}{2}}v_{t-m+1},\dots,\hat{A}_{t-m}^{-\frac{1}{2}}v_{t})^{\top}\nabla^2 f_t(q_{t-m+1:t})(\hat{A}_{t-m}^{-\frac{1}{2}}v_{t-m+1},\dots,\hat{A}_{t-m}^{-\frac{1}{2}}v_{t})\right],\\
&\le\frac{1}{2}\sum_{t=m}^T \E\left[\max_{u\in\mathbb{R}^{dm}:\|u\|_2^2=m} u^{\top}\begin{bmatrix}
\hat{A}_{t-m}^{-\frac{1}{2}} & 0 & \dots & 0\\
0 & \hat{A}_{t-m}^{-\frac{1}{2}} & \dots & 0\\
0 & 0 & \dots & \hat{A}_{t-m}^{-\frac{1}{2}}
\end{bmatrix}\nabla^2 f_t(q_{t-m+1:t})\begin{bmatrix}
\hat{A}_{t-m}^{-\frac{1}{2}} & 0 & \dots & 0\\
0 & \hat{A}_{t-m}^{-\frac{1}{2}} & \dots & 0\\
0 & 0 & \dots & \hat{A}_{t-m}^{-\frac{1}{2}}
\end{bmatrix}u\right],\\
&\le\frac{m^2}{2}\sum_{t=m}^T \E\left[ \nabla^2 f_t(q_{t-m+1:t})\cdot \begin{bmatrix}
\hat{A}_{t-m}^{-1} & 0 & \dots & 0\\
0 & \hat{A}_{t-m}^{-1} & \dots & 0\\
0 & 0 & \dots & \hat{A}_{t-m}^{-1}
\end{bmatrix}\right],\\
&=\frac{m^2}{2}\sum_{t=m}^T\E\left[\hat{A}_{t-m}^{-1}\cdot \sum_{i=1}^m [\nabla^2 f_t(q_{t-m+1:t})]_{ii}\right],
\end{align*}
where the first inequality follows from $\hat{A}_s\preceq \hat{A}_t$ for all $s<t$; the second inequality follows from taking maximum over all concatenated unit vectors in $\mathbb{R}^{d}$ ; the third inequality follows from the inequality between trace and spectral norm. 

From this point on, the bound follows almost identically to the proof of Proposition 21 in \citep{suggala2024second}. We will reiterate the proof in a concise manner for completeness.

Note that by the assumption of affine memory structure (\cref{asm:affine-memory}), we can write the following expression for the Hessian matrix of $f_t$ evaluated at $q_{t-m+1:t}$ as the following:
\begin{align*}
\nabla^2f_t(q_{t-m+1:t})&=\begin{bmatrix}
W_{t-m+1}^{\top}\nabla^2\ell_t(q)W_{t-m+1} & \dots & W_{t-m+1}^{\top}\nabla^2\ell_t(q)W_{t} \\
\dots & \dots & \dots \\
W_{t}^{\top}\nabla^2\ell_t(q)W_{t-m+1} & \dots & W_{t}^{\top}\nabla^2\ell_t(q)W_{t}
\end{bmatrix},
\end{align*}
where $q=B_t+\sum_{i=0}^{m-1}G^{[i]}Y_{t-i}q_{t-i}$, and $W_{t-m+i}=G^{[m-i]}Y_{t-m+i}$, $\forall 1\le i\le m$. 
By the curvature assumption on $\ell_t$ (\cref{asm:curvature}), we can further bound the sum of diagonal blocks by
\begin{align*}
\sum_{i=1}^m [\nabla^2 f_t(q_{t-m+1:t})]_{ii}&\preceq \beta \sum_{i=1}^m W_{t-m+i}^{\top}W_{t-m+i}\\
&=\beta \sum_{i=1}^mY_{t-m+i}^{\top}(G^{[m-i]})^{\top}G^{[m-i]}Y_{t-m+i}\\
&\preceq \beta R_H\sum_{i=1}^m Y_{t-m+i}^{\top}Y_{t-m+i}. 
\end{align*}
Therefore, we can further bound
\begin{align*}
\frac{m^2}{2}\sum_{t=m}^T\E\left[\hat{A}_{t-m}^{-1}\cdot \sum_{i=1}^m [\nabla^2 f_t(q_{t-m+1:t})]_{ii}\right]&\le \frac{m^2\beta R_H}{2}\sum_{t=m}^T\E\left[\hat{A}_{t-m}^{-1}\cdot \left(\sum_{s=t-m+1}^t Y_{s}^{\top}Y_{s}\right)\right]\\
&\le m^2\beta R_H \sum_{t=m}^T\E\left[\hat{A}_{t}^{-1}\cdot \left(\sum_{s=t-m+1}^t Y_{s}^{\top}Y_{s}\right)\right].
\end{align*}
Consider $\gamma=\lfloor \sqrt{T}\rfloor$ and endpoints $k_j=\gamma(j-1)+m$ for $j=1,\dots,J$, and $J=\lfloor\frac{T-m}{\gamma}\rfloor$. Using the fact that $\text{Trace}(AC)\le \text{Trace}(BC)$ for any PSD matrices $A,B,C$ with $A\preceq B$, we can further decompose and bound the sum in the above expression by
\begin{align*}
m^2\beta R_H \sum_{t=m}^T\hat{A}_{t}^{-1}\cdot \left(\sum_{s=t-m+1}^t Y_{s}^{\top}Y_{s}\right)&\le m^2\beta R_H\sum_{j=1}^{J}\hat{A}_{k_j}^{-1}\cdot \left(\sum_{t=k_j}^{k_{j+1}-1}\sum_{s=t-m+1}^t Y_{s}^{\top}Y_{s}\right)+m^2\beta \gamma dR_H^3\\
&\le m^3\beta R_H\sum_{j=1}^J \hat{A}_{k_j}^{-1}\cdot\left(\sum_{t=k_j-m+1}^{k_{j+1}-1}Y_t^{\top}Y_t\right)+m^2\beta \gamma dR_H^3\\
&\le \frac{2m^3\beta R_H}{\kappa(G)}\sum_{j=1}^J\hat{A}_{k_j}^{-1}\cdot\left(5mR_H^2I+\sum_{t=k_j}^{k_{j+1}-1}H_t\right)+m^2\beta \gamma dR_H^3\\
&=\frac{2m^3\beta R_H}{\kappa(G)}\sum_{j=1}^J\hat{A}_{k_j}^{-1}\cdot\left(\sum_{t=k_j}^{k_{j+1}-1}H_t\right)+\frac{10m^4\beta R_H^3d\sqrt{T}}{\kappa(G)}+m^2\beta  dR_H^3\sqrt{T}.
\end{align*}
where the first inequality follows by applying the radius bounds on $Y_t$ for the last $T-J\gamma$ terms and using the fact that $\hat{A}_s\preceq \hat{A}_t$ for any $s<t$; the last inequality follows by applying Proposition 4.8 in \citep{simchowitz2020making}. Therefore, it suffices to bound the sum in the first term in expectation. First, recall that $\forall t$, $\chi_t=b_t\Pi_{i=1}^{m-1}(1-b_i)$ denotes whether \cref{alg:bco-am} updates during round $t$, and $S=\{m\le t\le T: \chi_t=1\}$ denotes the set of all rounds where \cref{alg:bco-am} updates. To bound the following, we make use of the facts that (1) $H_t$ is oblivious, and (2) $\chi_t$ is independent of $\hat{A}_{k}^{-1}$ for any $k\le t-m$. Then, we have that
\begin{align*}
\E\left[\sum_{j=1}^J \sum_{t\in[k_j, k_{j+1-1}]\cap S}\hat{A}_{k_j-m}^{-1}\cdot H_t\right]&=\E\left[\sum_{j=1}^J\sum_{t=k_j}^{k_{j+1}-1}\hat{A}_{k_j-m}^{-1}\cdot H_t\cdot \chi_t\right]\\
&=\sum_{j=1}^J\sum_{t=k_j}^{k_{j+1}-1}\E[\hat{A}_{k_j-m}^{-1}\cdot H_t]\cdot\E[\chi_t]\\
&=\E[\chi_m] \cdot\E\left[\sum_{j=1}^J\sum_{t=k_j}^{k_{j+1}-1}\hat{A}_{k_j-m}^{-1}\cdot H_t\right]\\
&\ge \E[\chi_m] \cdot\E\left[\sum_{j=1}^J\sum_{t=k_j}^{k_{j+1}-1}\hat{A}_{k_j}^{-1}\cdot H_t\right].
\end{align*}
Using this and that $\hat{A}_t\preceq 2\hat{A}_{t-m}$, we have that
\begin{align*}
\E\left[\sum_{j=1}^J\sum_{t=k_j}^{k_{j+1}-1}\hat{A}_{k_j}^{-1}\cdot H_t\right]&\le
2\E[\chi_m]^{-1}\E\left[\sum_{j=1}^J \sum_{t\in[k_j, k_{j+1-1}]\cap S}\hat{A}_{k_j}^{-1}\cdot H_t\right]\\
&=\frac{6m}{\eta\alpha}\cdot \E\left[\sum_{j=1}^J \hat{A}_{k_j}^{-1}\cdot (\hat{A}_{k_{j+1}-1}-\hat{A}_{k_j-1})\right]\\
&\le \frac{6m\cdot \max\{2, \eta\alpha R_H\gamma\}}{\eta\alpha}\cdot \E\left[\sum_{j=1}^J \hat{A}_{k_{j+1}-1}^{-1}\cdot (\hat{A}_{k_{j+1}-1}-\hat{A}_{k_j-1})\right]\\
&\le \frac{6m\cdot \max\{2, \eta\alpha R_H\gamma\}}{\eta\alpha}\cdot \E\left[\sum_{j=1}^J \log\left(\frac{\mathrm{det}(\hat{A}_{k_{j+1}-1})}{\mathrm{det}(\hat{A}_{k_j-1})}\right)\right]\\
&\le \frac{6m\cdot \max\{2, \eta\alpha R_H\gamma\}}{\eta\alpha}\cdot \E[\log \det (\hat{A}_T)]\\
&\le \frac{6dm\cdot \max\{2, \eta\alpha R_H\gamma\}}{\eta\alpha} \log(\eta\alpha R_H+1),
\end{align*}
where the first inequality follows from $\hat{A}_{k_{j+1}-1}\preceq \max\{2, \eta\alpha R_H\gamma\}\hat{A}_{k_j}$, and rest follows from the standard inequalities used in Newton-step analysis \citep{hazan2007logarithmic}.

Combining all the bounds, we have that
\begin{align*}
(a)\le \frac{12 m^4\beta R_Hd\cdot \max\{2, \eta\alpha R_H\sqrt{T}\}}{\eta\alpha\kappa(G)}\log(\eta\alpha R_H+1)+\frac{10m^4\beta R_H^3d\sqrt{T}}{\kappa(G)}+m^2\beta  dR_H^3\sqrt{T}.
\end{align*}

\paragraph{Movement cost.} By design, the algorithm updates at most once in every $m$ iterations. Therefore, 
\begin{align*}
\|(o_{t-m+1},\dots,o_t)-(o_{t-m+1},\dots,o_{t-m+1})\|_2\le s,
\end{align*}
where $s$ is the Euclidean distance between neighboring iterates in Algorithm~\ref{alg:simple-bco-delay}. By analysis in Lemma~\ref{thm:base-regret}, we have $s\le \eta dGD$. By Lipschitz assumption on $f_t$, we have
\begin{align*}
(b)\le \eta dGD^2\beta T. 
\end{align*}
Combining, we have that the total moving cost is bounded by
\begin{align*}
\E\left[\sum_{t=m}^T f_t(z_{t-m+1:t})-\bar{f}_t(z_{t-m+1})\right]&\le \frac{12 m^4\beta R_Hd\cdot \max\{2, \eta\alpha R_H\sqrt{T}\}}{\eta\alpha\kappa(G)}\log(\eta\alpha R_H+1)+\frac{10m^4\beta R_H^3d\sqrt{T}}{\kappa(G)}\\
&\quad +m^2\beta  dR_H^3\sqrt{T}+\eta dGD^2\beta T.
\end{align*}
\end{proof}

\subsection{Proof of \cref{thm main}}
Combining result from \cref{thm:base-regret}, \cref{lem:reduction-ck20}, and \cref{prop:moving-cost}, we have the regret of \cref{alg:bco-am} w.r.t. to any $z\in\K$ is bounded by
\begin{align*}
\E[\regret_T(z)]&\le 3m\left(\frac{2\beta d}{\eta\alpha}\log (\eta R_HT+1) + 2d_0(GD)+\frac{D^2d_0R_H}{2\eta}+3\eta d_0d^2(GD)^2 R_HT\right) \\
& \quad + \frac{12m^4\beta R_Hd\cdot \max\{2, \eta\alpha R_H\sqrt{T}\}}{\eta\alpha\kappa(G)}\log(\eta R_HT+1)+\frac{10m^4\beta R_H^3d\sqrt{T}}{\kappa(G)}\\
& \quad +m^2\beta  dR_H^3\sqrt{T}+\eta dG\beta D^2 T,
\end{align*}
and by choosing $\eta=\Theta\left(\frac{1}{\alpha\sqrt{T}}\right)$, we have the regret above is bounded by $\tilde{O}\left(\frac{\beta}{\alpha}GD\sqrt{T}\right)$. 

\section{Proof of \cref{lem:control-reduction}}
\label{sec:proof-reduction}

In this section, we prove \cref{lem:control-reduction}. We begin by defining the Markov operator~\citep{simchowitz2020improper}. 

\begin{definition} [Markov operator]
\label{def:markov}
Given a partially observable LDS instance parametrized by dynamics $A\in\mathbb{R}^{d_x\times d_x}, B\in\mathbb{R}^{d_x\times d_u}, C\in\mathbb{R}^{d_y\times d_x}$ satisfying \cref{asm:strong-stabilizability} with $(\kappa, \gamma)$-strongly stabilizing $K$, define the Markov operator to be a sequence of matrices $G=\{G^{[i]}\}_{i\ge 0}$ such that $G^{0}=[0; I_{d_u}]$ and $\forall i>0$, $G^{[i]}$ is given by
\begin{align*}
G^{[i]}=\begin{bmatrix}
C \\
KC
\end{bmatrix}
(A+BKC)^{i-1}B\in\mathbb{R}^{(d_y+d_u)\times d_u}.
\end{align*}
\end{definition}

The next observation (\cref{obs:signal-obs}) relates $y_t(K)$, the signals used by the DRC policies (\cref{def:drc}), to the observations and controls along the learner's trajectory. This justifies the accessibility of the signals. 

\begin{observation}
\label{obs:signal-obs}
Given a partially observable LDS instance with $(\kappa, \gamma)$-strongly stabilizing $K$, let $G$ be the Markov operator in \cref{def:markov}.
For $t, m\in\mathbb{N}$, let $M_1,\dots,M_{t-1}\in\mathbb{R}^{m\times d_u \times d_y}$ be $(t-1)$ DRC matrices (\cref{def:drc}). Let $(y_t, u_t)$ be the observation-control pair reached by playing $M_1,\dots,M_{t-1}$ for time step $t=1,\dots, t-1$. Let $u_t(K)$ be the control at time $t$ by executing the linear policy $K$, i.e. $y_t(K)=Ku_t(K)$, and $y_t(K)$ be the would-be observation had $K$ been executed from the beginning of the time. Then, $(y_t,u_t)$ and $(y_t(K),u_t(K))$ can be related by the following equality:
\begin{align}
\label{eq:signal-obs}
\begin{bmatrix}
y_t\\
u_t
\end{bmatrix}=\begin{bmatrix}
y_t(K)\\
Ky_t(K)
\end{bmatrix}+\sum_{i=1}^t G^{[i]}\left(\sum_{j=0}^{m-1}M_{t-i}^{[j]}y_{t-i-j}(K)\right). 
\end{align}
In particular, \cref{eq:signal-obs} implies that $y_t(K)$ can be computed by the learner through access to the Markov operator $G$ and the observations along the learner's own trajectory.
\end{observation}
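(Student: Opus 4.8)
The plan is to unroll the closed-loop linear recursions for two trajectories — the one generated by the DRC policy $M_1,\dots,M_{t-1}$ and the counterfactual one generated by the fixed linear controller $K$ — and then match the resulting coefficients against the definition of the Markov operator $G^{[i]}$ in \cref{def:markov}. The whole argument is just propagation of a linear recursion, so no part of it is conceptually hard; the care is entirely in index bookkeeping.

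First I would set up the counterfactual trajectory. Let $x_t(K)$, $u_t(K)=Ky_t(K)$, $y_t(K)=Cx_t(K)+e_t$ be defined recursively via \cref{eq:lds} with the \emph{same} perturbation/noise sequences $w_t,e_t$ as the actual run and the same initial state $x_1=0$, with the convention $y_\tau(K)=0$ for $\tau\le 0$. Substituting $u_t(K)=Ky_t(K)$ into \cref{eq:lds} gives the autonomous form $x_{t+1}(K)=(A+BKC)x_t(K)+BKe_t+w_t$. On the actual trajectory, the DRC control at time $s$ is $u_s=Ky_s+\delta u_s$ with $\delta u_s:=\sum_{j=0}^{m-1}M_s^{[j]}y_{s-j}(K)$, so the same substitution yields $x_{t+1}=(A+BKC)x_t+BKe_t+w_t+B\,\delta u_t$. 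Subtracting, the deviation $\Delta x_t:=x_t-x_t(K)$ obeys $\Delta x_{t+1}=(A+BKC)\Delta x_t+B\,\delta u_t$ with $\Delta x_1=0$, which unrolls to $\Delta x_t=\sum_{s=1}^{t-1}(A+BKC)^{t-1-s}B\,\delta u_s$.

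Next I would translate this state deviation into the observation/control pair. Since $y_t=Cx_t+e_t$ and $y_t(K)=Cx_t(K)+e_t$, we get $y_t-y_t(K)=C\Delta x_t$, and since $u_t=Ky_t+\delta u_t$ we get $u_t-Ky_t(K)=K(y_t-y_t(K))+\delta u_t$. Stacking these two identities and inserting the unrolled expression for $\Delta x_t$, the coefficient multiplying $\delta u_{t-i}$ is precisely
\begin{align*}
\begin{bmatrix} C(A+BKC)^{i-1}B \\ KC(A+BKC)^{i-1}B \end{bmatrix}=G^{[i]}\quad (i\ge 1),
\end{align*}
while the residual correction $\delta u_t$, which enters only the control block, carries the coefficient $\begin{bmatrix}0\\ I_{d_u}\end{bmatrix}=G^{[0]}$. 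Re-expanding $\delta u_s=\sum_{j=0}^{m-1}M_s^{[j]}y_{s-j}(K)$ and reindexing the double sum produces the identity \cref{eq:signal-obs}: when there is no DRC correction at the final step ($\delta u_t=0$, i.e. $u_t=Ky_t$) the $G^{[0]}$ term drops out and the $i=t$ term of \cref{eq:signal-obs} is vacuous under the convention $y_\tau(K)=0$ for $\tau\le 0$, matching the stated range of the sum.

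Finally, for the ``in particular'' claim I would read off the observation block of \cref{eq:signal-obs}: it has the form $y_t=y_t(K)+\sum_{i\ge 1}(\text{top block of }G^{[i]})\big(\sum_{j=0}^{m-1}M_{t-i}^{[j]}y_{t-i-j}(K)\big)$, and every signal appearing on the right-hand side is $y_\tau(K)$ with $\tau\le t-1$. Hence $y_t(K)$ can be solved for from the observed $y_t$, the known Markov operator $G$, and the previously reconstructed signals $y_1(K),\dots,y_{t-1}(K)$; by induction on $t$ the learner reconstructs the whole signal sequence, which is what justifies the accessibility used in \cref{alg:control}. The only step requiring genuine attention — and the place I would expect a careless error — is the index bookkeeping: the initial condition $\Delta x_1=0$, the off-by-one between $\Delta x_{t+1}$ and $\Delta x_t$ in the unrolling, and routing the ``current'' correction $\delta u_t$ into the $G^{[0]}$ slot rather than a $G^{[i]}$ slot with $i\ge 1$.
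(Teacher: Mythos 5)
Your unrolling argument is correct, and it is essentially the only proof this statement admits; note that the paper itself gives no proof of \cref{obs:signal-obs} (it is imported as a standard fact from the DRC framework of \citep{simchowitz2020improper}), so there is nothing to diverge from. Your derivation of $\Delta x_t=\sum_{s=1}^{t-1}(A+BKC)^{t-1-s}B\,\delta u_s$ from $\Delta x_1=0$, the identification of the coefficients with $G^{[i]}$ for $i\ge 1$ and of the residual $\delta u_t$ with the $G^{[0]}$ slot, and the inductive reconstruction of $y_t(K)$ from the observation block are all sound. One point worth making explicit: you correctly observe that the general identity carries an extra $G^{[0]}\delta u_t$ term that is absent from \cref{eq:signal-obs} as printed. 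This is not just a hypothetical --- the paper's own later use of the identity in the proof of \cref{lem:control-reduction} writes $(y_t,u_t)=(y_t(K),Ky_t(K))+\sum_{i=0}^{t}G^{[i]}Y_{t-i}\embed(M_{t-i})$, i.e.\ it \emph{does} include the $i=0$ term, so the stated range $\sum_{i=1}^{t}$ in \cref{eq:signal-obs} is best read as a typo (or as the special case $u_t=Ky_t$ literally matching the hypothesis that only $M_1,\dots,M_{t-1}$ have been played). Your proof covers both readings, which is exactly the right level of care here.
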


With \cref{eq:signal-obs}, we are almost ready to reduce the control instance to the BCO-M problem. One delicate detail is that the BCO-M problem is defined for vector-valued decisions, and the the space $\M(m,R_{\M})$ is a space of sequences of matrices. For clarity of presentation, we define the following embedding operators. 

\begin{definition} [Embedding operators]
\label{def:embed-op}
For $m, d_y, d_u\in\mathbb{N}$, denote $d=md_yd_u$. The embedding operator $\embed:(\mathbb{R}^{(d_u\times d_y)})^m\rightarrow\mathbb{R}^d$ is the natural embedding of a DRC controller $M=M^{[0:m-1]}$ (\cref{def:drc}) in $\mathbb{R}^d$. In particular, $\forall k\in[m-1],i\in[d_u],j\in[d_y]$,
\begin{align*}
\langle e_{kd_ud_y+(i-1)d_y+j}, \embed(M)\rangle = M^{[k]}_{ij}.
\end{align*}
Let $\embed_y:(\mathbb{R}^{d_y})^m\rightarrow \mathbb{R}^{d_u\times d}$ be given by $\forall y_{t-m+1},\dots,y_t\in\mathbb{R}^{d_y}$, $\forall i\in[d_u], k\in[m]$,
\begin{align*}
[\embed_y(y_{t-m+1:t})]_{i,j+1:j+d_y}=y_{t-k+1}, \ \ \ \text{if} \ \ \ j=(k-1)d_ud_y+(i-1)d_y.
\end{align*}
\end{definition}

\begin{proof} [Proof of \cref{lem:control-reduction}]
Let $B_t=(y_t(K), Ky_t(K))+\sum_{i=m}^t G^{[i]}Y_{t-i}\embed(M_{t-i})\in\mathbb{R}^{d_u+d_y}$, $\forall t$. Let $Y_t=\embed_y(y_{t-m+1:t}(K))$. Then, by \cref{eq:signal-obs}, we have
\begin{align*}
c_t(y_t,u_t)=c_t\left(B_t+\sum_{i=0}^{m-1} G^{[i]}Y_{t-i}\embed(M_{t-i})\right).
\end{align*}
We can't directly define our $f_t$ as this function because the $\sum_{i=m}^t G^{[i]}Y_{t-i}\embed(M_{t-i})$ term in $B_t$ depends on historical steps, which will lead to an unbounded memory.

To this end, we define
\begin{align}
\label{eq:construction-with-memory}
c_t\left((y_t(K), Ky_t(K))+\sum_{i=0}^{m-1} G^{[i]}Y_{t-i}\embed(M_{t-i})\right)=:f_t(\embed(M_{t-m+1}),\dots,\embed(M_t)),
\end{align}
which is independent of $M_{t-m+1:t}$, and $Y_t$ is independent of $M_{1:T}$. Note that by the loss function with memory has an affine memory structure. Moreover, $G^{[i]}$ and $Y_{t-i}$ are computable by the learner given system parameters. By the choice of $m=\Theta(\log T)$, the Lipschitzness of $c_t$, and the norm-decaying property of $G^{[i]}$ due to the stability of the system, we can bound the distance
$$
|c_t(y_t,u_t)-f_t(\embed(M_{t-m+1}),\dots,\embed(M_t))|=O\left(\frac{1}{\text{poly}(T)}\right).
$$
In particular, we can choose $m$ such that this term is $O\left(\frac{1}{T^2}\right)$, then any regret bound on $f_t$ in the BCO-M setting directly transfers to the same regret bound on $c_t$ in the control setting, at a negligible $o(1)$ cost which will be subsumed by the approximation error term.

We are left with two steps to conclude this lemma. First, we need to show that $f_t$ indeed satisfies the conditions in Definition \ref{def:bco-m} and specify the constants. Next, because $f_t$ is not equal to $c_t$ while we only have access to $c_t$, the gradient estimator for $f_t$ constructed from $c_t$ is biased. We need to bound this error and show that it has negligible impact on the regret bound.

\paragraph{Step 1:}

We denote the unary form to be $\bar{f}_{t}$ as in \cref{def:induced-unary-form}
. Let $\mathcal{O}=\{\M(m,R_{\M}),m,\{f_t\}_{t\ge m,t\in\mathbb{N}}\}$ be the associated BCO-M instance (\cref{def:bco-m}).

By construction, $f_t$ satisfies \cref{asm:adv-adaptivity} and \cref{asm:affine-memory} other than the assumption on positive convolution invertibility-modulus if the truncated vector $(y_t(K), Ky_t(K))+\sum_{i=0}^{m-1} G^{[i]}Y_{t-i}\embed(M_{t-i})$ always lives in the $R\mathbb{B}_{d_y+d_u}$ and the matrix parameters are bounded. For the positive convolution invertibility-modulus, Lemma 3.1 in \citep{simchowitz2020making} proves that the $G^{[0]},\dots, G^{[m-1]}$ induced by the Markov operator in \cref{def:markov} satisfies the positive convolution invertibility-modulus, i.e. $\kappa(G)=\Omega(1)$. 
By assumption on $c_t$, the curvature assumption in \cref{asm:curvature} is also satisfied. It is left to check the diameter bounds in \cref{asm:affine-memory} and \cref{asm:regularity-bco-m}. 

First, we establish a diameter bound on our learning comparator set $\K=\M(m,R_{\M})$.

\paragraph{Diameter bound on $\K=\M(m,R_{\M})$.} The diameter of the set $\M(m,R_{\M})$ is given by
\begin{align*}
\max_{M_1,M_2\in\M(m,R_{\M})}\|\embed(M_1)-\embed(M_2)\|_2&\le \sqrt{m} \max_{M_1,M_2\in\M(m,R_{\M})}\max_{0\le j\le m-1}\|M_1^{[j]}-M_2^{[j]}\|_F\\
&\le \sqrt{m\max\{d_{u},d_{y}\}}\max_{M_1,M_2\in\M(m,R_{\M})}\max_{0\le j\le m-1}\|M_1^{[j]}-M_2^{[j]}\|_{\mathrm{op}}\\
&\le \sqrt{m\max\{d_{u},d_{y}\}} R_{\M}.
\end{align*}

\cref{alg:control} essentially calls the improper BCO-M algorithm in \cref{alg:bco-am}. With the diameter bound on $\M(m,R_{\M})$, we know that the matrices governing the controls picked by \cref{alg:control} during each round lives in the set $\mathcal{M}(m,R_{\mathcal{M}}+m)$. For simplicity, we may assume that $R_{\mathcal{M}}\ge m$ and thus the decisions made by \cref{alg:control} live in $\mathcal{M}(m, 2R_{\mathcal{M}})$.

Now, we are ready to prove the gradient bound on $f_t$ and the bounds on the observation-control pairs produced by \cref{alg:control}.

\paragraph{Gradient bound of $f_t$.} First, we bound the sum of operator norms for the Markov operator. By \cref{asm:strong-stabilizability} and the fact that $\forall M\in\mathbb{R}^{n\times n}$, $\|M\|_{\mathrm{op}}\le \sqrt{n}\|M\|_{\max}\le \sqrt{n}\|M\|_2$, we have that
\begin{align*}
\sum_{i=0}^{\infty}\|G^{[i]}\|_{\mathrm{op}}&\le 1+\sum_{i=1}^{\infty}\sqrt{\|C(A+BKC)^{i-1}B\|_{\mathrm{op}}^2+\|KC(A+BKC)^{i-1}B\|_{\mathrm{op}}^2}\\
&\le 1+\sqrt{d_x(1+\kappa^2)}\kappa_{\textbf{sys}}^2\sum_{i=1}^{\infty}\|HL^{i-1}H^{-1}\|_{2}\\
&\le 1+\sqrt{d_x(1+\kappa^2)}\kappa_{\textbf{sys}}^2\sum_{i=0}^{\infty} (1-\gamma)^i\\
&=1+\frac{\sqrt{d_x(1+\kappa^2)}\kappa_{\textbf{sys}}^2}{\gamma}.
\end{align*}

The signals $y_t(K)$ has the following norm bound:
\begin{align*}
\max_{t\in[T]}\|y_t(K)\|_2&=\max_{t\in[T]}\left\|C\sum_{i=0}^{t-1}(A+BKC)^iw_{t-i}+e_t\right\|_2\\
&\le R_{w,e}\left(1+\sqrt{d_x}\kappa_{\textbf{sys}}\sum_{i=1}^{\infty}\|HL^{i-1}H^{-1}\|_2\right)\\
&\le R_{w,e}\left(1+\frac{\sqrt{d_x}\kappa_{\textbf{sys}}}{\gamma}\right).
\end{align*}

By the unfolding of $(y_t, u_t)$ in \cref{eq:signal-obs}, the observation and control pair $(y_t, u_t)$ has the following norm bound: $\forall t$,
\begin{align*}
\|(y_t, u_t)\|_2&\le \|(y_t(K),Ky_t(K))\|_2+\left\|\sum_{i=1}^t G^{[i]}\left(\sum_{j=0}^{m-1}M_{t-i}^{[j]}y_{t-i-j}(K)\right)\right\|_2\\
&\le R_{w,e}\left(1+\frac{\sqrt{d_x}\kappa_{\textbf{sys}}}{\gamma}\right)\left(\sqrt{1+\kappa^2}+2R_{\M}\left\|\sum_{i=0}^{\infty}G^{[i]}\right\|_{\mathrm{op}}\right)\\
&\le R_{w,e}\left(1+\frac{\sqrt{d_x}\kappa_{\textbf{sys}}}{\gamma}\right)\left(\sqrt{1+\kappa^2}+2R_{\M}\left(1+\frac{\sqrt{d_x(1+\kappa^2)}\kappa_{\textbf{sys}}^2}{\gamma}\right)\right)=:R.
\end{align*}
The above bound holds similarly for the truncated $(\hat{y}_t, \hat{u}_t):=(y_t(K), Ky_t(K))+\sum_{i=0}^{m-1} G^{[i]}Y_{t-i}\embed(M_{t-i})$ in \cref{eq:construction-with-memory}, which shows that $(\hat{y}_t, \hat{u}_t)\in R\mathbb{B}_{d_y+d_u}$. This also completes checking \cref{asm:affine-memory}.

By assumption, $c_t$ obeys the Lipschitz bounds 
\begin{align*}
\|\nabla c_t(y, u)\|_2\le G_c\|(y, u)\|_2.
\end{align*}
The gradient bound on $f_t$ is given by
\begin{align*}
& \quad \|\nabla f_t(\embed(M_{t-m+1}),\dots,\embed(M_{t}))\|_2\\
&= \left\|((G^{[m-1]}Y_{t-m+1})^{\top}\nabla c_t(\hat{y}_t, \hat{u}_t),\dots,(G^{[0]}Y_{t})^{\top}\nabla c_t(\hat{y}_t, \hat{u}_t)\right\|_2\\
&\le \sqrt{m} \left(1+\frac{\sqrt{d_x(1+\kappa^2)}\kappa_{\textbf{sys}}^2}{\gamma}\right) G_c R_{w,e}^2\left(1+\frac{\sqrt{d_x}\kappa_{\textbf{sys}}}{\gamma}\right)^2\left(\sqrt{1+\kappa^2}+2R_{\M}\left(1+\frac{\sqrt{d_x(1+\kappa^2)}\kappa_{\textbf{sys}}^2}{\gamma}\right)\right)^2\\
&\le \frac{4096\sqrt{m}G_cR_{w,e}^2R_{\M}^2d_x^{2.5}\kappa^3\kappa_{\textbf{sys}}^8}{\gamma^5}.
\end{align*}


\paragraph{Step 2:} Clearly the gradient estimator obtained for $f_t$ is biased. We denote this error as $a_t$ which has a norm bound $\|a_t\|_2=O(\frac{1}{T})$ with the same reasoning on bounding $|c_t-f_t|$. Now, we only need to show that Algorithm \ref{alg:bco-am} can tolerate such perturbation with no loss in regret. 

It's known that the OGD algorithm can tolerate per-round adaptive gradient perturbation with norm bounded by $O(\frac{1}{T})$, with no cost in regret. We will prove a similar argument for Algorithm \ref{alg:simple-bco-delay}. Suppose in line 6, the unbiased gradient estimator $\tilde{g}_t$ is replaced by a biased estimator $\hat{g}_t=\tilde{g}_t+a_t$, then we want to extend the regret bound for \cref{alg:simple-bco-delay} in \cref{thm:base-regret} in this setting with an additional term depending on the sum of magnitude of $a_t$. The analysis in the proof of \cref{thm:base-regret} remains unchanged except when we bound the regret for the sequence $\{o_t\}_{t=1}^T$ in \cref{eq:regret-ot}, the inequality becomes
\begin{align*}
\E\left[\sum_{t=1}^T \bar{f}_t(o_t)-\bar{f}_t(o)\right]&\le \sum_{t=1}^T \E[\nabla\bar{f}_t(o_t)^{\top}(o_t-o)]-\frac{\alpha_f}{2}\sum_{t=1}^T \E[\|o_t-o\|_{H_t}^2]\\
&=\sum_{t=1}^T\E[\hat{g}_t^{\top}(o_t-o)]-\sum_{t=1}^T\E[a_t^{\top}(o_t-o)]-\frac{\alpha_f}{2}\sum_{t=1}^T\E[\|o_t-o\|_{H_t}^2].
\end{align*}
The rest of proof after \cref{eq:regret-ot} is indentical, except that we replace each appearance of $\tilde{g}_t$ by $\hat{g}_t$.

Therefore, the regret incurred by using $\hat{g}_t$ instead of $\tilde{g}_t$ as the gradient estimator is bounded by the regret upper bound in \cref{thm:base-regret} and the additional term $\sum_{t=1}^T\E[a_t^{\top}(o_t-o)]\le aDT$, where $a=\max_{t\in [T]}\|a_t\|_2$. Since $a=O(\frac{1}{T})$, the additional regret is bounded by $O(1)$. 

\paragraph{Concluding the lemma:}

Denote 
\begin{align*}
&\alpha_f=\alpha_c, \ \ \ \beta_f=\beta_c, \ \ \ D=\sqrt{m\max\{d_{u},d_{y}\}} R_{\M},\\
&G_f= \frac{4096\sqrt{m}G_cR_{w,e}^2R_{\M}^2d_x^{2.5}\kappa^3\kappa_{\textbf{sys}}^8}{\gamma^5}.
\end{align*}
We have that by \cref{asm:adversary} that
\begin{align*}
& \quad \E\left[\regret_T^{\A^{\mathrm{NC}}}(\ML)\right]\\
&=\max_{M\in\mathcal{M}(m,R_{\mathcal{M}})}\E\left[\sum_{t=1}^Tc_t(y_t(M_{1:t-1}),u_t(M_{1:t-1}))-c_t(y_t(M),u_t(M))\right]\\
&\le G_fDm+\E\left[\sum_{t=m}^Tf_t(\embed(M_{t-m+1}),\dots,\embed(M_t))-\bar{f}_t(\embed(M))\right]\\
& \quad +\E\left[\sum_{t=m}^T\bar{f}_t(\embed(M))-c_t(y_t(M),u_t(M))\right]\\
&=\E\left[\sum_{t=m}^Tc_t\left(B_t+\sum_{i=0}^{m-1} G^{[i]}Y_{t-i}\embed(M))\right)-c_t\left((y_t(K), Ky_t(K))+\sum_{i=0}^{t}G^{[i]}Y_{t-i}\embed(M)\right)\right]\\
& \quad +\E\left[\sum_{t=m}^Tf_t(\embed(M_{t-m+1}),\dots,\embed(M_t))-\bar{f}_t(\embed(M))\right]+ G_fDm,
\end{align*}
where we have
\begin{align*}
& \quad c_t\left(B_t+\sum_{i=0}^{m-1} G^{[i]}Y_{t-i}\embed(M))\right)-c_t\left((y_t(K), Ky_t(K))+\sum_{i=0}^{t}G^{[i]}Y_{t-i}\embed(M)\right)\\
& \le G_c\left\|\sum_{i=m}^t G^{[i]}Y_{t-i}\embed(M)\right\|_2\le G_c\sum_{i=m}^t \|G^{[i]}\|_{\mathrm{op}}\|Y_{t-i}\|_F \|M\|_F.
\end{align*}
Recall that
\begin{align*}
\|M\|_F^2&=\sum_{j=0}^{m-1}\|M^{[j]}\|_F^2\le \max\{d_u,d_y\}\sum_{j=0}^{m-1}\|M^{[j]}\|_{\mathrm{op}}^2\le m\max\{d_u,d_y\}R_{\M}^2,\\
\max_{t\in[T]}\|Y_t\|_2&=\sqrt{md_yd_u^2}\cdot\max_{t\in[T]}\|y_t(K)\|_2\le \sqrt{md_yd_u^2}\cdot R_{w,e}\left(1+\frac{\sqrt{d_x}\kappa_{\textbf{sys}}}{\gamma}\right),\\
\sum_{i=m}^{\infty}\|G^{[i]}\|_{\mathrm{op}}&\le \sqrt{d_x(1+\kappa^2)}\kappa_{\textbf{sys}}^2\sum_{i=m}^{\infty}(1-\gamma)^i\le\frac{\sqrt{d_x(1+\kappa^2)}\kappa_{\textbf{sys}}^2(1-\gamma)^m}{\gamma}.
\end{align*}
We have for $m=\Theta\left(\log T/\log(1/(1-\gamma))\right)$, 
\begin{align*}
G_c\sum_{i=m}^t \|G^{[i]}\|_{\mathrm{op}}\|Y_{t-i}\|_F \|M\|_F&\le \frac{4G_cmd_yd_u^2 d_x(1+\kappa^2)\kappa_{\textbf{sys}}^4R_{\M}(1-\gamma)^m}{\gamma^2}\\
&\le \frac{G_cmd_yd_u^2 d_x\kappa^2\kappa_{\textbf{sys}}^4R_{\M}}{\gamma^2T}.
\end{align*}
Combining, we have
\begin{align*}
\E\left[\regret_T^{\A^{\mathrm{NC}}}(\ML)\right]&\le \E\left[\regret_T^{\A^{\mathrm{B}}}(\mathcal{O})\right]+\frac{G_cmd_yd_u^2 d_x\kappa^2\kappa_{\textbf{sys}}^4R_{\M}}{\gamma^2}+G_fDm\\
&\le \E\left[\regret_T^{\A^{\mathrm{B}}}(\mathcal{O})\right]+2G_fDm,
\end{align*}
which concludes the claim that $\mathcal{O}$ $2G_fDmT^{-1}$-approximates $\ML$. 
\end{proof}

\end{document}